\newcommand{\R}{\mathbb{R}}
\newcommand{\B}{\mathbb{B}}
\newcommand{\cS}{\mathcal{S}}
\newcommand{\cU}{\mathcal{U}}
\newcommand{\cX}{\mathcal{X}}
\newcommand{\cQ}{\mathcal{Q}}
\newcommand{\cC}{\mathcal{C}}
\newcommand{\cB}{\mathcal{B}}
\newcommand{\cV}{\mathcal{V}}
\newcommand{\cF}{\mathcal{F}}
\newcommand{\cR}{\mathcal{R}}
\newcommand{\cZ}{\mathcal{Z}}
\newcommand{\cI}{\mathcal{I}}
\newcommand{\cW}{\mathcal{W}}
\newcommand{\cY}{\mathcal{Y}}
\newcommand{\interior}{\operatorname{int}}
\newcommand{\boundary}[1]{\operatorname{bd}(#1)}
\newtheorem{lemma}{Lemma}
\newtheorem{definition}{Definition}
\newtheorem{theorem}{Theorem}
\newtheorem{proposition}{Proposition}
\newtheorem{corollary}{Corollary}
\newtheorem{assumption}{Assumption}
\newtheorem{Fact}{Fact}
\newtheorem{ilex}{Illustrative Example}
\newcommand{\argmin}[1]{\underset{#1}{\operatorname{arg}\,\operatorname{min}}\;}
\newcommand{\cT}{\mathcal{T}}
\algnewcommand{\algorithmicand}{\textbf{ and }}
\algnewcommand{\algorithmicor}{\textbf{ or }}
\algnewcommand{\OR}{\algorithmicor}
\algnewcommand{\AND}{\algorithmicand}
\algnewcommand{\var}{\texttt}
\newtheorem{myexp}{Example}
\definecolor{constraint_color}{rgb}{0.1,0.55,1.0}
\tikzset{rfill/.code={%
\pgfmathsetmacro\r{rnd}\pgfmathsetmacro\g{rnd}\pgfmathsetmacro\b{rnd}%
\definecolor{.}{rgb}{\r,\g,\b}%
\pgfsetfillcolor{.}%
}}
\begin{document}
\title{Set based velocity shaping for robotic manipulators}

\markboth{Journal of \LaTeX\ Class Files,~Vol.~14, No.~8, August~2015}%
{Shell \MakeLowercase{\textit{et al.}}: Bare Demo of IEEEtran.cls for IEEE Journals}
\author{Ryan McGovern}
\author{Nikolaos~Athanasopoulos}
\author{Se\'{a}n McLoone}
\affil{School of Electronics, Electrical Engineering and Computer Science, Queen's University Belfast, Northern Ireland, UK. E-mail: \texttt{ \{rmcgovern03,n.athanasopoulos,s.mcloone\}@qub.ac.uk }}
\maketitle
\pagestyle{empty}
\thispagestyle{empty}

\let\thefootnote\relax\footnotetext{R.M. gratefuly acknowledges support from the UK  DfE and N.A. from EU 2020-1-UK01-KA203-079283 and EPSRC EP/T021942/1. }

\begin{abstract}
We develop a new framework for trajectory planning on predefined paths, for general N-link manipulators.
Different from previous approaches generating open-loop minimum time controllers or pre-tuned motion profiles by time-scaling, we establish analytic algorithms that recover all initial conditions that can be driven to the desirable target set while adhering to environment constraints. More technologically relevant, we characterise families of corresponding safe state-feedback controllers with several desirable properties. A key enabler in our framework is the introduction of a state feedback template, that induces ordering properties between trajectories of the resulting closed-loop system. The proposed structure allows working on the nonlinear system directly in both the analysis and synthesis problems. Both  offline computations and online implementation are  scalable with respect to the number of links of the manipulator. The results can potentially be used in a series of challenging problems: Numerical experiments on a commercial robotic manipulator demonstrate that efficient online implementation is possible. 
\end{abstract}

\section{Introduction}\label{section1}
\begin{figure*}
    \centering
    \begin{subfigure}[b]{0.3\textwidth}        \includegraphics[width=\textwidth]{"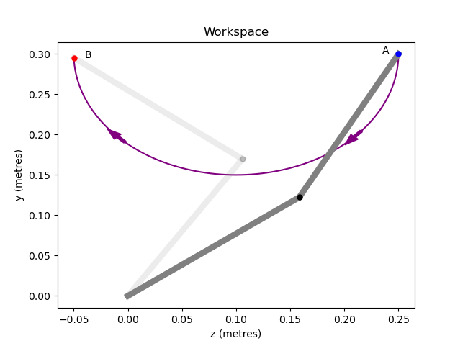"}
        \caption{}
        \label{fig-a}
    \end{subfigure}
    \begin{subfigure}[b]{0.3\textwidth}
        \includegraphics[width=\textwidth]{"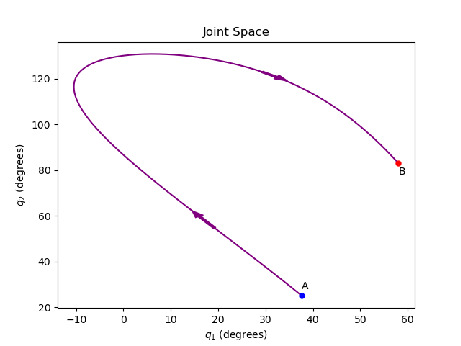"}
        \caption{}
        \label{fig-b}
    \end{subfigure}
    \begin{subfigure}[b]{0.3\textwidth}
        \includegraphics[width=\textwidth]{"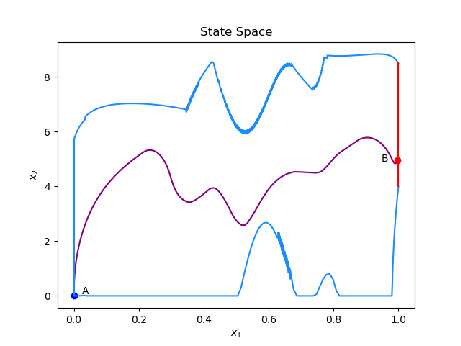"}
        \caption{}
        \label{fig-c}
    \end{subfigure}
  \caption{(a) Workspace representation of a path for a two DOF robot. (b) Joint space representation of the two DOF manipulator. (c) State space representation of the projeced path dynamics. The constraints curve is in blue, and the target set is in red.} 
  \label{fig:main}
\end{figure*}

\noindent
In recent years, there has been growing demand for safer robots, especially in the context of Industry 4.0 and environments containing fragile agents such as humans \cite{villani2018survey}. Robots face challenges in their integration with the workforce\cite{michaelis2020collaborative}, in particular  related to safety, inclusion of temporal specifications, and reliability \cite{alegue2018human}, \cite{kunze2018artificial}.

Planning of robotic manipulators has received much attention over the years with a wide range of approaches available \cite{lavalle_path_planning}. The challenge lies in computing collision free paths in a nonconvex environment, and in a sufficiently fast time \cite{volz2019predictive}. 

Established approaches to producing safe controllers aim to place constraints on speed, momentum and potential collision energies of the system in order to ensure safety \cite{joseph2018towards,joseph2018experimental,rossi2015pre}. These controllers rely on the trajectory planner found within the typical motion controller.

Trajectory planning, which is the focus of this article, is concerned with shaping the velocity profile as the robot moves from one configuration to another. Typically, the enforcement of hard constraints on the system dynamics is imposed on the velocity profile, see, e.g., \cite{Shin_McKay:85,bobrow1985time,Pfeiffer_johanni:87,hollerbach1983dynamic}, where an optimal velocity profile is calculated. These optimal approaches have also been extended to account for other state dependent and time dependent constraints \cite{ma2002time}, \cite{shiller1996time}.

Optimal control approaches, a version of which concerns time scaling algorithms, produce a single optimal trajectory fed to a robot to execute the desired motion. These algorithms have been successfully utilised in the last decades, however, they inevitably come with shortcomings, for example: They are  discontinuous, they are restricted to kinodynamic planning problems that specify a unique acceptable velocity for the end of the path, and offer little ability to adapt the profile along the path, as most versions generate open-loop controllers. 

To address these shortcomings, we investigate a larger class of state-feedback controllers, using the same reasoning for projecting the dynamics of the robotic manipulator on a prespecified path as in the traditional aforementioned approaches. However, instead of finding time-optimal solutions for the zero initial condition, we characterise the whole set of admissible states, namely, the distance traversed in the path and its pseudo velocity, that can reach a target set in finite time and satisfy constraints throughout. 

By projecting the dynamics on a path, any $N$-degree of freedom (DOF) manipulator is represented by a constrained double integrator. The control input corresponds to the pseudo acceleration across the path and is subject to non linear and nonconvex constraints, induced mainly by the physical constraints on the torques of the actuators. At this stage additional constraints can be imposed,  related, e.g., to potential collision energy, time constraints, or constraints relating to end effector forces that could prove useful in high precision applications, e.g., machining \cite{rossi2015pre}, \cite{olabi2010feedrate}.

Technically, rather than solving a single optimal control problem as, e.g., in \cite{Shin_McKay:85,bobrow1985time}, our main objective is to  answer the question: Can we compute the subset of states  for which a state feedback controller exists driving the system to a target set without violating any constraints in finite time?

Sets with these properties are called \emph{reach-avoid sets} \cite{8264291} \cite{landry2018reach}.  Many issues related to robots and path planning are often naturally expressed in the context of reach-avoid problems, examples include motion planning, collision avoidance, space vehicle docking, and object tracking applications \cite{fisac2015reach,shamgah2018path,homchaudhuri2016computing,kariotoglou2011stochastic}. 
To the best of our knowledge, this is the first time it is suggested to cast the kinodynamic planning problem as a reach-avoid set computation.

\begin{ilex}
 Figures \ref{fig-a} - \ref{fig-c} depict with purple color the same motion of a two link manipulator moving from point A to point B in three different spaces. The end effector work space is shown in Figure \ref{fig-a}, the space of joints is shown in Figure \ref{fig-b}. In the general case the joint space can be $N$ dimensional, depending upon the number of actuators on the robot. The state space shown in Figure \ref{fig-c} is always two dimensional, having as states the pseudo distance of the path travelled ($x_1$) and the end effector pseudo velocity  ($x_2$). We focus on designing state feedback controllers in this space (Figure \ref{fig-c}). We can incorporate and visualise additional constraints based on the system state  (configuration and speed) with blue. The target set is depicted in red in Figure \ref{fig-c}. 
\end{ilex}

Highly relevant for safety-critical systems, reach-avoid problems are not trivial. Efficient solutions depend on the nature of the system, with several techniques available for a variety of system types \cite{summers2010verification,krishna2017reach,shamgah2018symbolic}. In our case, we are exploiting the monotonicity of the projected trajectories with respect to a novel parameterisation of the control input.  This parameterisation allows ordering relations between generated closed-loop trajectories to be established in terms of both the initial conditions and the control input. These observations constitute the building blocks of an algorithmic procedure that recovers an approximation of the largest reach-avoid set. Compared to interesting related approaches \cite{6895310}, \cite{pham2018new}, our work deals directly with the nonlinear dynamics and constraints, and constructs the boundaries of the reach-avoid set using a combination of extreme trajectories. Thus, there is no added conservatism caused by the linearisation of dynamics. 

We also establish set-based state feedback controllers, induced by the parameterisation of the control strategy. Taking into account the preference for continuous, smooth, and robust state feedback control reducing wear on actuators, we establish general properties the control template must satisfy, and show how simple, Lipschitz continuous, continuous, and sliding mode-like controllers can be established. Last, we apply our framework to numerical experiments on commercial robotic manipulators, and discuss practical considerations in algorithmic implementation to show the efficacy of our approach. 
Preliminary ideas on computing reach-avoid sets for the studied problem are reported in  \cite{mcgovern2022kinodynamic}. In comparison, we formally show continuity properties of the suggested control template, establish analytic algorithms on computations of the reach-avoid set, propose new families of stabilizing feedback controllers, highlight practical challenges in implementation, and perform a formal comparison of our results with the conventional time optimal control approach.

Section \ref{section 2} discusses preliminaries. In Section \ref{section 3} we introduce the problem and the control input parameterisation, and establish the Lipshitz properties of the closed-loop system dynamics. Section \ref{section 4} establishes ordering relations between generated trajectories, while section \ref{section 5} illustrates in detail the algorithmic procedure for producing the reach-avoid set. Section \ref{section 7} deals with the characterisation and implementation of families of safe state feedback controllers. Section \ref{section 8} presents examples of the algorithms running on a Python based simulation using the model for the Universal Robots UR5 \cite{UniversalRobots} available within the Matlab file exchange \cite{Kebria_2016}. Conclusions are drawn in Section \ref{Conclusions}. For ease of exposition the proofs are placed in the Appendix.

\section{Preliminaries} \label{section 2}
\noindent \underline{Notation}:
For a vector $x\in\R^N$, $\|x \|$ is its 2-norm and $x_i$ its $i^{th}$ element.  For a vector $x_\text{label}\in\R^N$ with a label in its subscript, we write its $i^{th}$ element as $(x_\text{label})_i$. We denote sets, e.g., $\cS,\cR$, with capital letters in italics.
For a set $\cS$, $\text{conv}(\cS)$, $\boundary{\cS}$ and $\interior{(\cS)}$ denote its convex hull, boundary and interior respectively. The ball centered at zero with radius $\varepsilon$ is denoted by $\B(\varepsilon)$.  The distance of a point $x$ from a compact set $\cS$ $\text{dist}(x,\cS)=\min_{y\in\cS}\|x-y \|$. Vector inequalities hold elementwise.

\noindent
 The motion of $N$-link manipulators can be described using standard Lagrangian dynamics by $M_L(q)\ddot{q} + C_L(q, \dot{q})\dot{q} + g_L(q) = \tau,$

where $q \in \R^{N}$ corresponds to joint positions and $\tau\in\R^N$ collects the actuator torques for each joint. The matrices $M_L(q) \in \R^{N\times N}$, $C_L(q, \dot{q}) \in \R^{N\times N}$ are the inertial and centrifugal/Coriolis matrices respectively. 
The gravitational force effects are represented by $g_L(q)$. The matrix and vector functions $M_L(q)$, $C_L(q)$ and $g_L(q)$ are globally Lipschitz, \cite{choi2001iterative,bouakrif2013velocity,de2001commande}.

The Lagrangian dynamics can be projected to path dynamics: The first step is to consider an end effector path described by a function $q(s):[0,1]\rightarrow \R^n$ known as a path parameterisation where $s\in[0,1]$ is a scalar representing each configuration of the robot as it moves along the path.

Details can be found in the seminal papers \cite{Shin_McKay:85,bobrow1985time}  and the  book \cite[Chapter 9]{lynch_park}.

\begin{assumption} \label{path_assumption}
The path parameterisation $q(s)$ is two times differentiable. 
Moreover, 
$\left\|\frac{dq(s)}{ds}\right\|\geq \alpha_0$,
for $s\in [0,1]$ and a constant positive number $\alpha_0 > 0$.
\end{assumption}

Assumption \ref{path_assumption} is not restrictive and is common in the literature \cite{Shin_McKay:85,bobrow1985time}, \cite{lynch_park}. Intuitively, it requires that the configuration must be able to change such that the end effector can traverse a path in finite time.

The parameterised system becomes $M(s)\ddot{s} + C(s)\dot{s}^2 + g(s) = \tau$ where $M(s) \in \R^{N}$, $C(s) \in \R^{N}$, $g(s) \in \R^{N}$, are the transformed vectors, and  $\tau \in \R^{N}$ describes the torques and forces produced by the actuators. We define 
$x = \begin{bmatrix} s & \dot{s} \end{bmatrix}^T =  \begin{bmatrix} x_1 & x_2\end{bmatrix}^\top$. The torques are limited by the actuator dynamics as $\tau^{\min}_i(x)  \leq \tau_{i}(x) \leq \tau^{\max}_i(x)$ where $\tau^{\min}_i(x)$ and $\tau^{\max}_i(x)$ describe the maximum torque that can be generated in either direction.
The $i^{th}$ element, $i=1,...,N$, of the vectors within the parameterised system generates constraints of the form
\begin{equation} \label{eq:path_dynamics ith row}
\tau^{\min}_i(x) \leq M_i(x_1)\dot{x}_2 + C_i(x_1)x_2^2 + g_i(x_1) \leq \tau^{\max}_i(x).
\end{equation}

\begin{assumption} \label{ass_actuator_lims}
The state-dependent constraint bounds $\tau^{\min}_i(x)$ and $\tau^{\max}_i(x)$ are locally Lipschitz in x for the whole range of admissible motions, for all $i=1,...,N$.
\end{assumption}

Assumption \ref{ass_actuator_lims} is not restrictive for electrical motors due to their continuous relationships between torque and speed, see, e.g., \cite{Hughes_Electrical} \cite[Chapter~10]{spong_robotics}.

Rearranging \eqref{eq:path_dynamics ith row}, the limits on possible accelerations $\dot{x}_2=\ddot{s}$ from any given state $x$ can be formed. We set $    A_i(x)  = \frac{\tau^{\min}_i(x) - C_i(x_1)x_2^2 - g_i(x_1)}{M_i(x_1)}$, 
$D_i(x)  = \frac{\tau^{\max}_i(x) - C_i(x_1) x_2^2 - g_i(x_1)}{M_i(x_1)}$ for $M_i(x_1)<0$, and
$A_i(x)  = \frac{\tau^{\max}_i(x) - C_i(x_1)x_2^2 - g_i(x_1)}{M_i(x_1)}$, $D_i(x)  = \frac{\tau^{\min}_i(x) - C_i(x_1)x_2^2 - g_i(x_1)}{M_i(x_1)}$  for $M_i(x_1)>0$.
The case of zero inertia points, i.e., when $M_i(x_1)=0$, for one or more indices $i=1,..,N$ is taken into account by posing algebraic constraints on $x$  \cite{lynch_park} of the form 
$\tau^{\min}_i(x)\leq C_i(x_1)x_2^2 + g_i(x_1) \leq \tau^{\max}_i(x).$ 
By grouping the above constraints,  we can write 
\begin{equation}\label{eq_constraints}
    D(x)  \leq \ddot{s} \leq A(x) 
\end{equation}
where
\begin{align}
    A(x) &= \min_{\{i\in[1,N]:M_i(x)\neq 0\}} A_i(x) ,\label{eq:minA}\\
    {D}(x) &= \max_{\{i\in[1,N]:M_i(x)\neq 0\}}{D_i(x)}.\label{eq:maxD}
\end{align}
The projected system is a double integrator
\begin{equation}\label{eq_system_state_equations}
    \dot{x}=\Phi x+Eu(x),
\end{equation}
with $\Phi=\begin{bmatrix} 0 & 1\\ 0 & 0\end{bmatrix}$, $E=\begin{bmatrix} 0 \\ 1\end{bmatrix}$. The state feedback $u(\cdot):\R^2\rightarrow \R$ accounts for the acceleration $\ddot{s}$, whose admissible range is state-dependent \eqref{eq_constraints}. The solution of \eqref{eq_system_state_equations} at time $t$ starting from an initial condition $x_0$, under the control function $u(x)$ is 
\begin{equation}\label{eq:system at time t}
    \phi_\text{f}(t;x_0,u(x))=e^{\Phi t} x_{0}  + \int_{0}^{t} e^{ \Phi (t-\tau)}E u(x) d \tau. 
\end{equation}
The state and input constraints are $x\in \cX', \ \  u(x) \in\cU_x,$ where $\cX'   =  \cC \cap \cC_0$, with  $\cC  = \left\{x\in\R^2: B(x) \leq 0 \right\}$, $    B(x) = \begin{bmatrix}-x_1 & x_1-1 & -x_2 & D(x) - A(x) \end{bmatrix}^\top$. Moreover,  $\cC_0  = \{ x\in\R^2 : (\exists j \in [1,N]: M_j(x_1) = 0, \quad 
\tau^{min}_{j} - C_{j}(x)x_2^2 - g_{j}(x) \leq 0, \quad  -\tau^{max}_{j} + C_{j}(x)x_2^2 + g_{j}(x) \leq 0  ) \}$, and $\cU_x = \{ u(x)\in\R: D(x)\leq u(x)\leq A(x) \}$. We note that within our framework we can incorporate additional constraints. We assume we can eventually describe all  constraints via two functions $C^{u}(x_1):[0,1]\to\R$ and $C^{l}(x_1):[0,1]\to\R$,  representing upper and lower bounds respectively.
This leads to the description of the constraint set \begin{equation}\label{eq: admisible region additional constraint set}
\cX = \left\{ x\in\R^2 : C^{l}(x_1) \leq x_2 \leq C^{u}(x_1), x_1\in[0,1] \right\}.
\end{equation}
The overall state (we note $\cX\subseteq \cX')$ and input constraints are 
\begin{equation}
 x\in \cX, \ \  u(x) \in\cU_x. \label{eq:admissible and input constraints}
\end{equation}

\begin{assumption} \label{assumption_3}
The functions $x_2=C^{u}(x_1) $ and $x_2=C^{l}(x_1) = x_2$ are semi-differentiable continuous bijections.
\end{assumption}

We note Assumption~\ref{assumption_3} is indeed somewhat restrictive, as, for example, it does not allow islanding. Nevertheless, it covers many realistic and challenging cases, including polynomial and trigonometric constraints. 

\section{Control parameterisation and  properties}\label{section 3}

In this section, we propose a  parameterisation of the state feedback controller as a convex combination of the state dependent input constraints.
The generally non-restrictive Assumptions \ref{path_assumption}, \ref{ass_actuator_lims}  allow us to characterize the parameterisation with Lipschitz continuity, which constitutes our first main technical result.  Preliminary results and the proof of Proposition~\ref{proposition 1}  can be found in the Appendix.

\begin{proposition}\label{proposition 1}
The functions $A(x)$, $D(x)$ are locally Lipschitz continuous in $\cX'$.
\end{proposition}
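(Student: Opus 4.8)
The plan is to show that $A_i(x)$ and $D_i(x)$ are locally Lipschitz on the relevant domain for each $i$, and then invoke the standard fact that a finite pointwise minimum (resp. maximum) of locally Lipschitz functions is locally Lipschitz. First I would recall the structure: on the set where $M_i(x_1) \neq 0$, each $A_i(x)$ and $D_i(x)$ is (up to a sign choice) of the form $\big(\tau^{\bullet}_i(x) - C_i(x_1)x_2^2 - g_i(x_1)\big)/M_i(x_1)$. By Assumption~\ref{ass_actuator_lims} the numerators' torque terms $\tau^{\min}_i,\tau^{\max}_i$ are locally Lipschitz in $x$; the terms $C_i(x_1)x_2^2$ and $g_i(x_1)$ are locally Lipschitz because $C_i,g_i$ are (globally) Lipschitz in $x_1$ by the Lagrangian smoothness stated after the dynamics, and $x_2 \mapsto x_2^2$ is locally Lipschitz, while products and sums of locally Lipschitz functions on a bounded set are locally Lipschitz. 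Likewise $M_i(x_1)$ is locally Lipschitz.

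The key step is to handle the division. Fix a point $\bar x \in \cX'$. I would split into two cases. If $M_i(\bar x_1) \neq 0$ for a given $i$, then by continuity of $M_i$ there is a neighbourhood of $\bar x$ on which $|M_i(x_1)| \geq c > 0$; on that neighbourhood $1/M_i(x_1)$ is locally Lipschitz (since $|1/a - 1/b| = |a-b|/|ab| \leq |a-b|/c^2$), so the quotient $A_i(x)$, being a product of a locally Lipschitz bounded numerator with a locally Lipschitz bounded reciprocal, is locally Lipschitz there. The remaining indices $i$ with $M_i(\bar x_1) = 0$ are the subtle part: near such a point these $A_i$ may blow up, but the definitions \eqref{eq:minA}, \eqref{eq:maxD} explicitly exclude indices with $M_i(x)=0$ from the min/max, and — crucially — on $\cX'$ the set $\cC_0$ enforces the algebraic consistency constraints at zero-inertia points, which ensures that the "active" index attaining the minimum in $A(x)$ (resp. maximum in $D(x)$) stays among those with $M_i$ bounded away from zero in a neighbourhood. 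Concretely, I would argue that at $\bar x\in\cX'$ there exists at least one index $j$ with $M_j(\bar x_1)\neq 0$, and that for indices whose inertia vanishes the corresponding $A_i\to+\infty$ (resp. $D_i\to-\infty$) as one approaches the zero-inertia locus from the admissible side, so they are not active and can be dropped from the min/max on a small enough neighbourhood; thus locally $A(x) = \min_{i\in S} A_i(x)$ for a fixed finite index set $S$ of "non-degenerate" indices, each term locally Lipschitz.

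Finally I would combine: the min (resp. max) of finitely many locally Lipschitz functions is locally Lipschitz, because for each pair of points in a common neighbourhood, $|\min_i f_i(x) - \min_i f_i(y)| \leq \max_i |f_i(x) - f_i(y)| \leq (\max_i L_i)\|x-y\|$ with $L_i$ the local Lipschitz constants. Hence $A(x)$ and $D(x)$ are locally Lipschitz on $\cX'$. The main obstacle I anticipate is making rigorous the claim that zero-inertia indices are locally inactive in the min/max — i.e., carefully using the structure of $\cC_0$ and the sign of $M_i$ on either side of a zero-inertia configuration to justify restricting to a fixed non-degenerate index set on a neighbourhood. Everything else (Lipschitzness of sums, products, reciprocals bounded away from zero, and finite min/max) is routine, so I would keep those steps brief and concentrate the detail on the degeneracy argument.
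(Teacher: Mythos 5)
Your plan follows essentially the same route as the paper's proof: Lipschitz arithmetic for sums, products, compositions and reciprocals bounded away from zero handles the non-degenerate indices, and near zero-inertia points the degenerate $A_i$, $D_i$ blow up in the direction that removes them from the min/max, while the remaining sign combination is excluded because it would force $A(x)<D(x)$ and hence $x\notin\cX'$. The only refinements worth noting are that the paper derives this exclusion from the constraint $D(x)-A(x)\leq 0$ encoded in $\cC$ rather than from $\cC_0$, and that the existence of at least one index with $|M_{i}(x_1)|$ bounded away from zero (which you assert) is proved there as a separate lemma using Assumption~\ref{path_assumption} and the positive definiteness of $M_L$.
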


In agreement with approaches  in the literature \cite{Shin_McKay:85}, \cite{lynch_park}, we set $u(x) =\dot{x_2}$ as the input variable. 
We build all subsequent results in the following parameterisation of the control law via the \emph{actuation level function} $\lambda(x): \cX \rightarrow [0,1]$ as follows
\begin{equation}\label{eq: input definition}
u(x, \lambda(x)) = D(x) + \lambda(x) (A(x) - D(x)).
\end{equation}
The following result is a consequence of Proposition \ref{proposition 1}.
\begin{corollary}\label{col: lambda x}
Consider the system  \eqref{eq_system_state_equations}  constraints \eqref{eq:admissible and input constraints}, and a Lipschitz continuous function $\lambda(x):\cX\rightarrow [0,1]$. Then \eqref{eq: input definition} is locally Lipschitz continuous in $\cX$.
\end{corollary}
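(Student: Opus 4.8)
The plan is to invoke Proposition~\ref{proposition 1} and then use the standard fact that local Lipschitz continuity is preserved under addition and under multiplication of \emph{bounded} locally Lipschitz functions. Fix an arbitrary $x\in\cX$ and choose a compact neighborhood $\cN\subseteq\cX$ of $x$ (for instance the intersection of a closed ball $\B(\varepsilon)$ centered at $x$ with $\cX$; this is compact since, under Assumption~\ref{assumption_3}, $C^{l},C^{u}$ are continuous on $[0,1]$, so $\cX$ is closed and bounded). Because $\cX\subseteq\cX'$, Proposition~\ref{proposition 1} yields constants $L_A,L_D$ such that $A$ is $L_A$-Lipschitz and $D$ is $L_D$-Lipschitz on $\cN$; being continuous on the compact set $\cN$, they are also bounded there, say $|A(y)|,|D(y)|\le M$ for all $y\in\cN$. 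By hypothesis $\lambda$ is Lipschitz on $\cX$ with some constant $L_\lambda$, hence $L_\lambda$-Lipschitz on $\cN$, and $0\le\lambda(y)\le 1$.

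Next I would estimate, for arbitrary $y,z\in\cN$, using the ``add and subtract'' trick for the product term, $\lambda(y)\le 1$, and $|A(z)-D(z)|\le 2M$:
\begin{align*}
&\bigl|u(y,\lambda(y))-u(z,\lambda(z))\bigr|\\
&\quad\le |D(y)-D(z)| + \bigl|\lambda(y)\bigl(A(y)-D(y)\bigr)-\lambda(z)\bigl(A(z)-D(z)\bigr)\bigr|\\
&\quad\le L_D\|y-z\| + \lambda(y)\bigl|\bigl(A(y)-D(y)\bigr)-\bigl(A(z)-D(z)\bigr)\bigr|\\
&\qquad\quad + |A(z)-D(z)|\,|\lambda(y)-\lambda(z)|\\
&\quad\le \bigl(L_D + (L_A+L_D) + 2M\,L_\lambda\bigr)\|y-z\|.
\end{align*}
Thus $u(\cdot,\lambda(\cdot))$ is Lipschitz on $\cN$ with constant $L := L_A + 2L_D + 2M\,L_\lambda$.

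Since $x\in\cX$ was arbitrary, every point of $\cX$ admits a (relative) neighborhood on which $u(\cdot,\lambda(\cdot))$ is Lipschitz, which is precisely local Lipschitz continuity on $\cX$. I do not expect a genuine obstacle: the only mild care is needed for points on the boundary of $\cX$, where the neighborhood must be understood relative to $\cX$; but Proposition~\ref{proposition 1} already provides the Lipschitz property on the larger set $\cX'$, so restricting to $\cX$ costs nothing, and the boundedness of $A$, $D$, $\lambda$ required to bound the product is automatic from compactness. The only real content is the product rule for Lipschitz functions, i.e.\ the displayed estimate.
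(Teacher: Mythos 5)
Your proof is correct and takes essentially the same route the paper intends: the corollary is stated there as a direct consequence of Proposition~\ref{proposition 1} combined with the closure of sums and products of bounded locally Lipschitz functions (Fact~\ref{fact_lipschitz_combination}~(i),~(iii) in the Appendix), which is precisely the add-and-subtract product estimate you carry out explicitly.
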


The forward and backward dynamics of the system are 
\begin{align}
\dot{x} & =  \Phi x + E u(x, \lambda(x))\label{eq:state dynamics forward},\\
\dot{x} & = -\Phi x  -E u(x, \lambda(x))\label{eq:state dynamics reverse}
\end{align}
respectively, with $\Phi$, $E$ defined in \eqref{eq_system_state_equations}.

We consider target sets of the form
\begin{equation}\label{target set}
    \cX_{\text{T}}=\{x \in \cX: x_1=c, \ \  x^\ast_l\leq  x_2\leq x^\ast_u  \},
\end{equation}
where $0<c\leq 1$ and  $x^\ast_u,  x^\ast_l \in \R$.
Given a particular choice of the actuation level function $\lambda(x)$ \eqref{eq: input definition} and an initial condition $x_0\in\cX$, the solution of \eqref{eq:state dynamics forward} is $\phi_\text{f}(t;x_0,\lambda(x))=e^{\Phi t} x_{0}  + \int_{0}^{t} e^{\Phi (t-\tau)}E u(x, \lambda(x)) d \tau $ where $e^{\Phi t}=\begin{bmatrix} 1 & t \\ 0 & 1\end{bmatrix}$.
Correspondingly, the solution of the backwards dynamics \eqref{eq:state dynamics reverse} is $\phi_\text{b}(t;x_0,\lambda(x)))=e^{-\Phi t} x_{0} - \int_{0}^{t} e^{-\Phi (t-\tau)}E u(x, \lambda(x)) d \tau$. 

\begin{definition}
A set $\cY\subseteq \cX$ is a \emph{reach-avoid set} with respect to the system \eqref{eq:state dynamics forward},  constraints \eqref{eq:admissible and input constraints} and a target set $\cX_\text{T}$ \eqref{target set}
if  any initial condition $x_0\in\cY$  can be transferred to  $\cX_T$ in finite time under an admissible control law $u(x)$ \eqref{eq: input definition}, i.e., there is $t^\star>0$ such that  $\phi_\text{f}(t^*;x_0,u(x))\in\cX_\text{T}$, and $\phi_\text{f}(t;x_0,u(x))\in\cX$ for all $t \in [0,t^*]$.
\end{definition}
\begin{definition} 
Consider the system  \eqref{eq:state dynamics forward}, the constraint set $\cX$ \eqref{eq: admisible region additional constraint set}, a target set $\cX_\text{T}$ \eqref{target set} and the controller $u(x)$ given by \eqref{eq: input definition}. The \emph{maximal reach-avoid set} is
\begin{align} \label{eq: reach-avoid set}
\cR(\cX_{\text{T}})  = & \{  x_0 \in \cX: ( \exists t^{*}>0: \phi_{f}(t^*,x_0,u(x))\in\cX_{\text{T}},  \nonumber \\
& \;\;\;\;  \phi_{f}(t,x_0,u(x)) \in \cX, \forall t\in[0,t^*]  ) \}.
\end{align}
\end{definition}

\section{Ordering properties of closed-loop trajectories}\label{section 4}

We collect the admissible trajectories of the closed loop system in the forward and backwards direction in the sets
\begin{equation} \label{trajectory forwards}
    \cT_\text{f}(x_0,\lambda(x))=\{x \in \cX: (\exists t\geq 0: x=\phi_{\text{f}}(t;x_0,\lambda(x)))\},
\end{equation}
\begin{equation} \label{trajectory backwards}
    \cT_\text{b}(x_0,\lambda(x))=\{x \in \cX: (\exists t\geq 0: x=\phi_\text{b}(t;x_0,\lambda(x)))\},
\end{equation}
respectively. In the remainder, and to discuss trajectories without needing to specify the direction in which they are formed, we write $\cT(x_0, \lambda(x))$. We define intervals $\cI= [(x_a)_1, (x_b)_1]$  $(x_a)_1\leq (x_b)_1\leq 1$. The \emph{slice} of a set $\cX$ on an interval $\cI$ is
\begin{equation}
\cW(\cX,\cI) = \cX \cap \left\{ x: x_1\in \cI \right\}.\label{slice description}
\end{equation}

The mapping \eqref{slice description} is valid also when $\cI=\{x_a\}$ is a singleton. In this case, we write for simplicity $\cW(\cX,x_a)$. 

\begin{lemma}\label{rule 1}
Let $n\in[0,1]$. Consider the slice $\cW(\cX, n)$, two states $x_{u}, x_{l} \in \cW(\cX, n)$ such that $(x_u)_2> (x_l)_2$, and a constant actuation level $\lambda(x)=\lambda \in [0,1]$. Then,
$$\cT(x_{u}, \lambda) \cap \cT(x_{l}, \lambda) = \emptyset.$$
\end{lemma}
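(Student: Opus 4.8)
The plan is to exploit the fact that, under a constant actuation level $\lambda$, the closed-loop dynamics \eqref{eq:state dynamics forward} (respectively \eqref{eq:state dynamics reverse}) become an autonomous ordinary differential equation $\dot{x} = \Phi x + E u(x,\lambda)$, whose right-hand side is locally Lipschitz continuous by Corollary~\ref{col: lambda x} (a constant $\lambda$ is trivially Lipschitz). Hence solutions are unique, and through every point of $\cX$ there passes exactly one maximal trajectory of this autonomous system. Two trajectories that share a common point must therefore coincide as sets. The content of the lemma is thus that the two trajectories emanating from the distinct points $x_u$ and $x_l$ on the same vertical slice $\cW(\cX,n)$ cannot be the same curve, i.e. $x_u$ and $x_l$ do not lie on a common integral curve.

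First I would set up the uniqueness argument precisely: suppose for contradiction that $y \in \cT(x_u,\lambda) \cap \cT(x_l,\lambda)$. Since both $\cT_\text{f}$ and $\cT_\text{b}$ are pieces of the same autonomous flow, $x_u$ and $x_l$ lie on the single integral curve through $y$; parametrise that curve by $x(t)$, so $x_u = x(t_u)$ and $x_l = x(t_l)$ with $t_u \neq t_l$ (they are distinct points, so distinct times). Without loss of generality take $t_l < t_u$. The key structural observation is that along the path the first coordinate is monotone: $\dot{x}_1 = x_2$, and on $\cX$ we have $x_1 \in [0,1]$ with $x_2 = C^l(x_1) \geq 0$ — in fact by Assumption~\ref{path_assumption} and the constraint set \eqref{eq: admisible region additional constraint set} one has $x_2 \ge 0$, so $x_1(t)$ is nondecreasing along any admissible trajectory. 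But both $x_u$ and $x_l$ have the same first coordinate, namely $(x_u)_1 = (x_l)_1 = n$. Monotonicity of $x_1(\cdot)$ on $[t_l,t_u]$ together with $x_1(t_l) = x_1(t_u) = n$ forces $x_1(t) \equiv n$ on $[t_l,t_u]$, hence $\dot{x}_1 = x_2(t) = 0$ there. Then $x_u = x(t_u)$ and $x_l = x(t_l)$ both have second coordinate $0$, contradicting $(x_u)_2 > (x_l)_2$.

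I expect the main obstacle to be handling the edge cases in the monotonicity step cleanly: one must be careful when the trajectory touches $x_2 = 0$, whether the constraint set genuinely enforces $x_2 \ge 0$ (it does, via the component $-x_2 \le 0$ of $B(x)$ and the bound $C^l(x_1) \ge 0$ one should confirm is intended), and whether $x_1 = 0$ or $x_1 = 1$ can be reached — but since we are told $x_u, x_l \in \cW(\cX,n)$ both stay in $\cX$, this is not an issue for the points themselves, only potentially for the connecting arc, which however is also required to lie in $\cX$ by definition \eqref{trajectory forwards}--\eqref{trajectory backwards}. A secondary subtlety is that $\cT(x_0,\lambda)$ as defined uses only $t \ge 0$, so when $y$ lies "before" $x_u$ or "after" $x_l$ in forward time one should phrase the argument symmetrically for the forward and backward cases, or simply note that in all four combinations the two points still lie on one integral curve of the same autonomous vector field and the $x_1$-monotonicity argument applies verbatim. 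Once these bookkeeping points are dispatched, the contradiction is immediate and the proof concludes.
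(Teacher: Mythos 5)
Your proposal is correct, and its core is the same as the paper's: a constant $\lambda$ makes the closed-loop vector field locally Lipschitz (Corollary~\ref{col: lambda x}), so Picard--Lindel\"of gives uniqueness of solutions in both time directions, and two trajectories sharing a point must locally coincide as integral curves. Where you go beyond the paper is the second half of your argument. The paper's proof stops at ``the backward dynamics must produce two solutions, which is a contradiction,'' but uniqueness by itself only shows that $x_u$, $x_l$ and the putative intersection point all lie on one and the same maximal integral curve; it does not by itself exclude that possibility. Your monotonicity step --- $\dot x_1 = x_2 \ge 0$ on $\cX$, so an integral curve meeting the slice $x_1=n$ at two distinct parameter values must have $x_1\equiv n$ and hence $x_2\equiv 0$ on the intervening interval, contradicting $(x_u)_2>(x_l)_2$ --- is exactly the missing piece that rules this out, and it does not appear in the paper's proof. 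So your write-up is not merely equivalent but strictly more complete. The only remaining bookkeeping, which you already flag, is that the connecting arc lies in $\cX$ by the definitions \eqref{trajectory forwards}--\eqref{trajectory backwards}, so the bound $x_2\ge 0$ (inherited from $\cX\subseteq\cX'$) applies along it and the monotonicity argument goes through.
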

\begin{lemma}\label{rule 2}
Let $n\in[0,1]$. Consider two states $x_{u}, x_{l} \in \cW(\cX, n)$ such that $ x_{u} \neq x_{l}$. Consider two fixed actuation levels   $0 \leq \lambda_l < \lambda_u\leq 1 $. Then,  the trajectories can  intersect at most once in $\interior{(\cX)}$, i.e., either 
$\cT(x_{u}, \lambda_u) \cap \cT(x_{l}, \lambda_l)=\{x^\star \}$, for some $x^\star \in \interior{\cX}$, or $\cT(x_{u}, \lambda_u) \cap \cT(x_{l}, \lambda_l)=\emptyset$.
\end{lemma}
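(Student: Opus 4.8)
The plan is to reduce the statement to a one–dimensional comparison argument in the $(x_1,x_2)$–plane. Any candidate intersection point lies in $\interior(\cX)$, and since $\cX\subseteq\cX'$ forces $x_2\geq 0$, every point of $\interior(\cX)$ has $x_2>0$. Consequently, on the portion of any closed–loop trajectory that stays in $\interior(\cX)$ the first coordinate is strictly monotone in time ($\dot x_1=x_2>0$ under \eqref{eq:state dynamics forward}, $\dot x_1=-x_2<0$ under \eqref{eq:state dynamics reverse}), so the trajectory is the graph of a function $x_2=\psi(x_1)$. Eliminating time, this graph solves
\[
\frac{dx_2}{dx_1}=\frac{u\big((x_1,x_2),\lambda\big)}{x_2}=:f_\lambda(x_1,x_2),
\]
and the right–hand side is the same for the forward and the backward curve through a given state, since passing from \eqref{eq:state dynamics forward} to \eqref{eq:state dynamics reverse} flips the sign of both numerator and denominator; hence it is irrelevant in which direction each of $\cT(x_u,\lambda_u)$, $\cT(x_l,\lambda_l)$ is generated. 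By Corollary~\ref{col: lambda x} (applied with the constant, hence Lipschitz, level $\lambda$), $u$ is locally Lipschitz, so $f_\lambda$ is continuous where $x_2>0$ and each such $\psi$ is $C^1$.

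I would then assume, for contradiction, that $\cT(x_u,\lambda_u)$ and $\cT(x_l,\lambda_l)$ meet at two distinct points $x^{(1)},x^{(2)}\in\interior(\cX)$, and let $\psi_u,\psi_l$ be the corresponding graphs and $\delta:=\psi_u-\psi_l$ on the common interval of $x_1$–values. Because each trajectory is a graph over $x_1$, the two intersection points have distinct first coordinates, so $\delta$ has at least two zeros. At every zero $\bar x_1$ of $\delta$ the two curves share the state $\bar x=(\bar x_1,\bar x_2)$ with $\bar x_2=\psi_u(\bar x_1)=\psi_l(\bar x_1)$, and therefore
\[
\delta'(\bar x_1)=f_{\lambda_u}(\bar x)-f_{\lambda_l}(\bar x)=\frac{u(\bar x,\lambda_u)-u(\bar x,\lambda_l)}{\bar x_2}=\frac{(\lambda_u-\lambda_l)\big(A(\bar x)-D(\bar x)\big)}{\bar x_2}>0,
\]
using $\lambda_u>\lambda_l$, $\bar x_2>0$, and $A(\bar x)>D(\bar x)$, which holds because $\cX\subseteq\cC$ gives $A\geq D$ throughout $\cX$ while equality is confined to $\boundary{\cX}$ by construction of the constraint curves $C^u,C^l$ (Assumption~\ref{assumption_3}).

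To finish, I would invoke the elementary fact that a $C^1$ function whose derivative is strictly positive at each of its zeros has at most one zero: at such a zero $\delta$ is positive immediately to the right and negative immediately to the left, so if it vanished again on, say, the right, the infimum of the zeros lying there would be approached only by positive values of $\delta$, forcing $\delta'\leq 0$ at that zero and contradicting $\delta'>0$; the left side is symmetric. This contradicts $\delta$ having two zeros, so $\cT(x_u,\lambda_u)\cap\cT(x_l,\lambda_l)\cap\interior(\cX)$ is either a single point $x^\star$ or empty, as claimed.

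The step I expect to be the main obstacle is the strict inequality $A(\bar x)>D(\bar x)$ on $\interior(\cX)$: the rest is a routine comparison/ODE argument, but excluding interior points at which the admissible–acceleration interval collapses — where the two controls coincide and the two curves would be tangent rather than transversal, allowing repeated contact — must be argued carefully from how $\cX$ embeds in $\cX'$ together with the non–degeneracy built into Assumption~\ref{assumption_3}. A secondary technical point is that a trajectory may graze $\boundary{\cX}$ and temporarily leave $\interior(\cX)$, so $\psi_u$ and $\psi_l$ need not share a single interval of definition; this is handled by running the sign argument only on the open $x_1$–interval around $x^{(1)},x^{(2)}$ on which both graphs remain in $\interior(\cX)$.
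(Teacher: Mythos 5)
Your proof is correct and follows essentially the same route as the paper's: both arguments hinge on the transversality of the two curves at any common point — since $\lambda_u>\lambda_l$ and $A(x)-D(x)>0$ on $\interior{(\cX)}$ the slopes $u(x,\lambda)/x_2$ differ strictly there — and then conclude by a continuity/sign argument that a second crossing is impossible. Your reduction to graphs $x_2=\psi(x_1)$ and the function $\delta=\psi_u-\psi_l$ is just a more formal rendering of the paper's ``angle of emanation'' comparison, and the caveats you flag (strictness of $A>D$ in the interior, grazing of $\boundary{\cX}$) are left equally implicit in the paper's own proof.
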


A straightforward consequence of Lemma \ref{rule 2} is that for any two trajectories evolving from a single state with different constant actuation levels the one with the higher actuation level will remain above the other for the duration of the motion. Specifically, for any $x_1^\star\in[0,1]$, any $\lambda_b\leq \lambda_a\leq 1$ it holds that $x_b\leq x_a$, where $x_a= \cW(\cX, x_1^\ast) \cap\mathcal{T}_{f}(x_0,\lambda_a)$, $x_b=\cW(\cX, x_1^\ast) \cap  \mathcal{T}_{f}(x_0,\lambda_b)$. This is summarised in the following corollary.

\begin{corollary}
Consider the system  \eqref{eq:state dynamics forward}, the constraint set $\cX$ \eqref{eq: admisible region additional constraint set}, trajectories \eqref{trajectory forwards} and \eqref{trajectory backwards} with constant actuation levels $\lambda_a, \lambda_b$. 
The following hold.
\begin{equation*}
    \cW(\cX, n)\cap \cT_{f}(x_0,\lambda_a)\geq \cW(\cX, n)\cap \cT_{f}(x_0,\lambda_b),
\end{equation*}
\begin{equation*}
    \cW(\cX, n)\cap \cT_{b}(x_0,\lambda_b)\geq \cW(\cX, n)\cap \cT_{b}(x_0,\lambda_a),
\end{equation*}
for any $0<\lambda_b\leq \lambda_a\leq 1$, for all $x_0\in\cX$, and $n \in [0,1]$.
\end{corollary}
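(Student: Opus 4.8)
The plan is to reduce the statement to a scalar picture and then run a crossing argument built on Lemma~\ref{rule 2}. First I would note that along a forward trajectory $\dot x_1 = x_2 \ge 0$, so $x_1$ is non-decreasing and a forward trajectory meets each slice $\cW(\cX,n)$ in at most one point; hence $\cW(\cX,n)\cap\cT_f(x_0,\lambda)$ is a (possibly empty) singleton, and the stated inequalities simply compare the $x_2$-coordinates of these points whenever both are nonempty. When $n\le(x_0)_1$ the forward inequality is trivial, so I would assume $n>(x_0)_1$; by the analogous monotonicity $\dot x_1=-x_2\le 0$ of the backward flow, the relevant range for the second inequality is $n<(x_0)_1$. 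If $\lambda_a=\lambda_b$ the two trajectories coincide, so I would take $\lambda_b<\lambda_a$.

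Next I would pin down the ordering infinitesimally at the common starting point. Since $\cX\subseteq\cX'\subseteq\cC$ forces $D(x)\le A(x)$, the input $u(x,\lambda)=D(x)+\lambda(A(x)-D(x))$ is non-decreasing in $\lambda$. Parametrising $x_2$ as a function of $x_1$ along each forward trajectory (legitimate on $\interior(\cX)$, where $x_2>0$), the two graphs agree at $x_1=(x_0)_1$ and the right-derivative of the $\lambda_a$-graph there, $u(x_0,\lambda_a)/(x_0)_2$, is at least that of the $\lambda_b$-graph, $u(x_0,\lambda_b)/(x_0)_2$. Hence on some interval $((x_0)_1,n']$ the $\lambda_a$-trajectory lies weakly above the $\lambda_b$-trajectory, and strictly so on a right neighbourhood of $(x_0)_1$ whenever $A(x_0)>D(x_0)$. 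Then I would propagate this forward by contradiction: if the ordering failed on $((x_0)_1,n]$, continuity and the previous step would produce a point $m\in((x_0)_1,n]$ at which the two graphs cross with the $\lambda_a$-trajectory strictly on top just to the left; choosing such a left point $m^-$ and setting $x_u=\cW(\cX,m^-)\cap\cT_f(x_0,\lambda_a)$, $x_l=\cW(\cX,m^-)\cap\cT_f(x_0,\lambda_b)$ gives $x_u\ne x_l$ in the slice $\cW(\cX,m^-)$ with $(x_u)_2>(x_l)_2$ and $\lambda_b<\lambda_a$. The constant-actuation trajectory through $x_u$ at level $\lambda_a$ is exactly $\cT(x_0,\lambda_a)$ and the one through $x_l$ at level $\lambda_b$ is exactly $\cT(x_0,\lambda_b)$, so both contain $x_0$ and both contain the crossing point at $x_1=m$ — two distinct common points, contradicting Lemma~\ref{rule 2}. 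The backward inequality follows verbatim from the same argument applied to \eqref{eq:state dynamics reverse}, where $\dot x_2=-u(x_0,\lambda)$ is non-increasing in $\lambda$, so the roles of $\lambda_a$ and $\lambda_b$ are exchanged.

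I expect the main obstacle to be the handling of the boundary and of degeneracies rather than the core argument. Lemma~\ref{rule 2} is only guaranteed to control intersections in $\interior(\cX)$, so if the crossing point at $x_1=m$, or $x_0$ itself, lies on $\boundary(\cX)$, one needs a limiting argument — approximating $x_0$ from $\interior(\cX)$ and appealing to continuous dependence of solutions on initial data, which follows from the local Lipschitz bounds of Proposition~\ref{proposition 1} and Corollary~\ref{col: lambda x}. Likewise the pinched case $A(x_0)=D(x_0)$ (where $u$ is not strictly increasing in $\lambda$ at $x_0$, and the two trajectories may share an initial segment) must be tracked until the first point at which $A>D$, genuine coincidence on an interval being excluded by the uniqueness afforded by Corollary~\ref{col: lambda x}. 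One should also keep in mind that the inequality at a slice is only meaningful when both intersections are nonempty — but since $x_1$ is monotone along each trajectory, this is automatic once the relevant endpoint is reached.

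A cleaner self-contained route, which sidesteps Lemma~\ref{rule 2} entirely, is to parametrise each forward trajectory by $x_1$ and set $v=x_2^2/2$, giving the scalar ODE $dv/dx_1=u(x_1,\sqrt{2v},\lambda)$ whose right-hand side is non-decreasing in $\lambda$ and, by Proposition~\ref{proposition 1}, locally Lipschitz in $v$ wherever $x_2>0$; the standard comparison lemma then yields $v_a(x_1)\ge v_b(x_1)$ for $x_1\ge(x_0)_1$ and, integrating in the opposite direction, $v_a(x_1)\le v_b(x_1)$ for $x_1\le(x_0)_1$, which is exactly the claimed pair of orderings. Since the paper frames these monotonicity facts as consequences of Lemma~\ref{rule 2}, I would present the crossing argument as the primary proof and mention the comparison-lemma version as a remark.
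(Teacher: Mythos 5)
Your proposal is correct and takes essentially the same route as the paper, which offers no separate proof but justifies the corollary in the preceding paragraph as an immediate consequence of Lemma~\ref{rule 2}: the two orbits already share the point $x_0$, so a second crossing would violate the at-most-one-intersection property, and the initial ordering $u(x_0,\lambda_a)\geq u(x_0,\lambda_b)$ fixes which trajectory lies above. Your elaboration of the boundary and pinched ($A=D$) degeneracies, and the comparison-lemma alternative, go beyond what the paper records but are consistent with it.
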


Moreover, Lemma \ref{rule 2} implies that in the case where $(x_{u})_{2} > (x_{l})_{2}$, we have $\cT_1 \cap \cT_2 = \emptyset$ for any two trajectories $\cT_1 = \cT(x_u, \lambda_u), \cT_2 = \cT(x_l, \lambda_l)$, where $x_u$, $x_l$ lie on a slice $\cW(\cX, x_{1})$ where $x_1\in[0,1]$ such that $x_{u} \neq x_{l}$ and $\lambda_u>\lambda_l$.

\begin{corollary}
Consider the system  \eqref{eq:state dynamics forward}, the constraint set $\cX$ \eqref{eq: admisible region additional constraint set}, trajectories \eqref{trajectory forwards} with constant actuation levels $\lambda_u, \lambda_l$ and $n\in[0,1]$. Then, for any $x_u, x_l$ such that $(x_u)_1=(x_l)_1$, $(x_u)_2>(x_l)_2$,  any $0\leq \lambda_l<\lambda_u\leq 1$ it holds
\begin{equation*}
    \cW(\cX, n)\cap \cT_f(x_u,\lambda_u)>\cW(\cX, n)\cap \cT_f(x_l,\lambda_l).
\end{equation*}
\end{corollary}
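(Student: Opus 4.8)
The plan is to bridge the two trajectories in the statement with the \emph{auxiliary} trajectory $\cT_f(x_u,\lambda_l)$, which shares its initial state with $\cT_f(x_u,\lambda_u)$ and its (constant) actuation level with $\cT_f(x_l,\lambda_l)$, and then to chain a ``vertical'' comparison (same initial state, different $\lambda$) with a ``horizontal'' one (same $\lambda$, ordered initial states). Throughout, write $m:=(x_u)_1=(x_l)_1$, so that $x_u,x_l\in\cW(\cX,m)$, and fix any $n\in[0,1]$ for which both $\cW(\cX,n)\cap\cT_f(x_u,\lambda_u)$ and $\cW(\cX,n)\cap\cT_f(x_l,\lambda_l)$ are nonempty; for other $n$ there is nothing to compare. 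Because $x_2>0$ on $\interior(\cX)$, the coordinate $x_1$ is strictly increasing along each forward trajectory, so every trajectory meets the slice $\cW(\cX,n)$ in at most one point, which is what ``$\cW(\cX,n)\cap\cT_f(\cdot,\cdot)$'' denotes, and ``$>$'' compares its $x_2$-coordinate.

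The first step invokes the preceding ($\lambda$-monotonicity) corollary with $x_0=x_u$ and $\lambda_l\le\lambda_u$, which gives $\cW(\cX,n)\cap\cT_f(x_u,\lambda_u)\ge\cW(\cX,n)\cap\cT_f(x_u,\lambda_l)$. The second step applies Lemma~\ref{rule 1} to the states $x_u,x_l\in\cW(\cX,m)$, which satisfy $x_u\neq x_l$ and $(x_u)_2>(x_l)_2$, with the common constant actuation level $\lambda_l$, yielding $\cT_f(x_u,\lambda_l)\cap\cT_f(x_l,\lambda_l)=\emptyset$. The third step promotes this disjointness to a strict ordering: parametrising both trajectories as graphs $x_2=\psi(x_1)$ (legitimate on $\interior(\cX)$ by the previous paragraph, with the boundary slices handled by continuity/closure), the gap between their $x_2$-values at a common $x_1$ is a continuous function that equals $(x_u)_2-(x_l)_2>0$ at $x_1=m$ and, by Step~2, never vanishes, hence remains strictly positive; therefore $\cW(\cX,n)\cap\cT_f(x_u,\lambda_l)>\cW(\cX,n)\cap\cT_f(x_l,\lambda_l)$. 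Chaining Steps~1 and~3 gives $\cW(\cX,n)\cap\cT_f(x_u,\lambda_u)\ge\cW(\cX,n)\cap\cT_f(x_u,\lambda_l)>\cW(\cX,n)\cap\cT_f(x_l,\lambda_l)$, the desired inequality. (Alternatively, one can first establish $\cT_f(x_u,\lambda_u)\cap\cT_f(x_l,\lambda_l)=\emptyset$, as in the paragraph preceding the statement, via Lemma~\ref{rule 2} together with Steps~1--3, and then finish by the same sign argument applied directly to these two trajectories.)

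I expect the only genuine obstacle to be Step~3, i.e.\ turning ``the two trajectories never meet'' into ``one lies strictly above the other on every admissible slice.'' This needs each trajectory to be representable as a graph over the relevant $x_1$-interval --- which relies on $x_2>0$ in $\interior(\cX)$ and on a careful treatment of the slices touching $\boundary{\cX}$, where $x_2$ may vanish --- together with a connectedness argument for the sign of the continuous gap function and a short check that $\cT_f(x_u,\lambda_l)$ indeed reaches slice $n$ whenever the two trajectories in the statement do (it lies between them, so it does). This is the same mechanism already used in Lemma~\ref{rule 1} and the preceding corollary, so it should be routine. A minor point is that the preceding corollary is stated for $0<\lambda_b$ while the statement allows $\lambda_l=0$; this is covered either by a limiting argument using continuous dependence of solutions on $\lambda$ (a consequence of Corollary~\ref{col: lambda x} and standard ODE theory) or by re-running that corollary's proof with $\lambda_b=0$.
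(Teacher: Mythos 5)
Your proof is correct, but it takes a genuinely different route from the paper's. The paper treats this corollary as an immediate consequence of Lemma~\ref{rule 2} applied directly to the pair $\cT_f(x_u,\lambda_u)$, $\cT_f(x_l,\lambda_l)$: the paragraph preceding the statement asserts that, under $(x_u)_2>(x_l)_2$ and $\lambda_u>\lambda_l$, the ``at most one intersection'' conclusion of Lemma~\ref{rule 2} sharpens to no intersection at all (a crossing at some $x_1^\ast>m$ would, by the emanation-angle argument in that lemma's proof, put the $\lambda_u$-trajectory strictly below the $\lambda_l$-trajectory just to the left of $x_1^\ast$, and since it starts strictly above at $x_1=m$ this would force a second crossing, a contradiction), after which the strict ordering follows by the same sign-of-the-gap argument you use. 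Your decomposition through the auxiliary trajectory $\cT_f(x_u,\lambda_l)$ instead chains the first corollary of Section IV (monotonicity in $\lambda$ from a common initial state) with Lemma~\ref{rule 1} (disjointness for a common $\lambda$ and ordered initial states); this buys you something, namely that you never have to upgrade ``at most one intersection'' to ``none'' in this configuration, a step the paper leaves implicit. The price is the extra bookkeeping you correctly flag: that each trajectory is a graph over $x_1$, that the auxiliary trajectory actually reaches slice $n$ (your sandwiching argument works because each slice of $\cX$ is an interval under Assumption~\ref{assumption_3}), and that the continuous gap function's sign is decided by its strictly positive value at $x_1=m$. Your observation that the first corollary is stated for $0<\lambda_b$ while this statement permits $\lambda_l=0$ points to a genuine, if minor, mismatch in the paper's hypotheses; either of your proposed fixes is adequate.
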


\section{Approximation of the maximal Reach-avoid set} \label{section 5}
 
We first define the hypograph and epigraph of a set $\cZ\subseteq \cX$  respectively
\begin{align*}
\cR_u(\cZ) & = \{x\in \cX: (\exists y\in\cZ: x_1=y_1, x_2\leq y_2 \}, \\
\cR_l(\cZ) & = \{x\in \cX: (\exists y\in\cZ: x_1=y_1, x_2\geq y_2 \}.
\end{align*}
We also define the set-valued operation $\cQ(\cdot)$ that returns the `leftmost' state  of a trajectory $\cT=\cT(x_0,\lambda(x))$ in $\cX$    
\begin{equation*}
    \cQ(\cT)=\{x^\ast\in\cX\cap\cT: x_1^\ast\leq x_1, \forall x\in\cX\cap\cT \}.
\end{equation*}
The sets representing  the upper and lower boundaries of the $\cX$ \eqref{eq: admisible region additional constraint set} are defined below
\begin{align}
    \cV^u &= \left\{ x\in\cX:  C^{u}(x_1) = x_2  \right\}, \label{upper ACC}\\
    \cV^l &= \left\{ x\in\cX:  C^{l}(x_1) = x_2 \right\}. \label{lower ACC}
\end{align}

\begin{algorithm}[h]\label{algorithm 1}
\caption{Reach-avoid set computation}
\begin{algorithmic}[1]
\State $\mathbf{Input:}$ $\cX$, $\cX_{\text{T}}$, $\cV^l$, $\cV^u$,  $\epsilon>0$
\State $\mathbf{Output:}$  $\cR_{\epsilon}(\cX_\text{T})$ 
\State $\cT_u^{*} \leftarrow \cT_\text{b}(x^*_u, 0)$,   
\State $\{x_{d}\} \leftarrow \cQ( \cT_u^{*} )$ 
\State $\cT_l^{*} \leftarrow \cT_\text{b}(x^*_l, 1)$
\State $\{x_{a} \}\leftarrow \cQ( \cT_l^{*} )$
\If {$x_{a}\in \cV^l \AND x_{d}\in \cV^l$}
    \State $\cZ_l \leftarrow \cT_l^{*} \cup  \text{extend}(\cV^l, [(x_d)_1, (x_a)_1], 1, \epsilon)$
    \State $\cZ_u \leftarrow \cT_u^{*}$
\ElsIf{$x_{a} \in \cV^u \AND x_{d} \in \cV^u$ } 
    \State $\cZ_{u} \leftarrow \cT_u^{*} \cup \text{extend}(\cV^u, [(x_d)_1, (x_a)_1], 0, \epsilon)$
    \State $\cZ_l \leftarrow \cT_l^{*}$
\Else
    \If{$x_{a} \in \cV^l$}
        \State $\cZ_{l} \leftarrow \cT_{l}^{*} \cup \text{extend}(\cV^l, [0, (x_a)_1], 1, \epsilon)$
    \Else
        \State $Z_l \leftarrow \cT_l^{*}$ 
    \EndIf
    \If{$x_{d} \in \cV^u$}
    \State $\cZ_{u} \leftarrow \cT_u^* \cup \text{extend}(\cV^u, [0, (x_d)_1], 0, \epsilon)$
    \Else
        \State $\cZ_u \leftarrow \cT_u^*$
    \EndIf
\EndIf
\State \textbf{return} $\cR_\epsilon (\cX_\text{T}) \leftarrow \cR_{l}(\cZ_u) \cap \cR_{u}(\cZ_l)$
\end{algorithmic}
\end{algorithm}
\begin{figure}[h]
    \begin{subfigure}{0.23\textwidth}
        \centering
        \begin{tikzpicture}[thick,scale=0.5, every node/.style={scale=0.5}]
        \draw[->,thin] (0,0)--(7,0) node[right]{$x_1$};
        \draw[->,thin] (0,0)--(0,5) node[above]{$x_2$};
        \draw[constraint_color, thick] (0.05, 3) .. controls (1.5, 4) and (2.25, 1) .. (3,4) node[above, xshift=-0.2cm]{$\cV^{u}$};
        \draw[constraint_color, thick] (3, 4) .. controls (4, 6) and (5, 3) .. (6,5);
        \draw[constraint_color, thick] (6, 5) -- (6, 0.05);
        \draw[constraint_color, thick] (0.05, 0.05) .. controls (3, 2) and (3.5, 0.05) .. (6, 0.05)node[above, xshift=-4cm, yshift=0.25cm]{$\cV^{l}$};
        \draw[constraint_color, thick] (0.05, 0.05) -- (0.05, 3);
        \draw[red!90!black, thick] (6, 2) -- (6, 3)node[right, yshift=-0.5cm, xshift=0cm]{$\cX_\text{T}$};
        \fill[black,thick,dashed] (6,2) circle (0.05cm) node[right, yshift=-0.25cm, xshift=0cm]{$x^{*}_l$};
        \fill[black,thick,dashed] (6,3) circle (0.05cm) node[right, yshift=0.25cm, xshift=0cm]{$x^{*}_u$};
        \draw[magenta!70!black, thick] (6, 3) .. controls (5, 4) and (4, 3) .. (3,0.85) node[above, yshift=2.2cm, xshift=1.25cm]{$T^{*}_u$};
        \fill[black,thick,dashed] (3,0.85) circle (0.05cm) node[below, yshift=0cm, xshift=0cm]{$x_d$};
        \draw[magenta!70!black, thick] (6, 2) .. controls (5.75, 2) and (5.25, 2) .. (5,0.15) node[above, yshift=0.5cm, xshift=0.5cm]{$T^{*}_l$};
        \fill[black,thick,dashed] (5,0.15) circle (0.05cm) node[right, yshift=0.1cm, xshift=0cm]{$x_a$};
        \end{tikzpicture}
        \subcaption{Situation 1} \label{alg1 a}
    \end{subfigure}
    \begin{subfigure}{0.23\textwidth}
        \centering
        \begin{tikzpicture}[thick,scale=0.5, every node/.style={scale=0.5}]
        \draw[->,thin] (0,0)--(7,0) node[right]{$x_1$};
        \draw[->,thin] (0,0)--(0,5) node[above]{$x_2$};
        \draw[constraint_color, thick] (0.05, 3) .. controls (1.5, 4) and (2.25, 1) .. (3,4) node[above, yshift=0.8cm, xshift=0.5cm]{$\cV^{u}$};
        \draw[constraint_color, thick] (3, 4) .. controls (4, 6) and (5, 3) .. (6,5);
        \draw[constraint_color, thick] (6, 5) -- (6, 0.05);
        \draw[constraint_color, thick] (0.05, 0.05) .. controls (3, 2) and (3.5, 0.05) .. (6, 0.05)node[above, xshift=-4cm, yshift=0.25cm]{$\cV^{l}$};
        \draw[constraint_color, thick] (0.05, 0.05) -- (0.05, 3);
        \draw[red!90!black, thick] (6, 2) -- (6, 3)node[right, yshift=-0.5cm, xshift=0cm]{$\cX_\text{T}$};
        \fill[black,thick,dashed] (6,2) circle (0.05cm) node[right, yshift=-0.25cm, xshift=0cm]{$x^{*}_l$};
        \fill[black,thick,dashed] (6,3) circle (0.05cm) node[right, yshift=0.25cm, xshift=0cm]{$x^{*}_u$};
        \draw[magenta!70!black, thick] (6, 3) .. controls (5, 4) and (4, 3) .. (3,4) node[above, yshift=-0.50cm, xshift=2cm]{$T^{*}_u$};
        \fill[black,thick,dashed] (3,4) circle (0.05cm) node[left, yshift=0cm, xshift=0cm]{$x_d$};
        \draw[magenta!70!black, thick] (6, 2) .. controls (4, 2) and (3, 2) .. (2,2.65) node[above, yshift=-1.3cm, xshift=3cm]{$T^{*}_l$};
        \fill[black,thick,dashed] (2,2.65) circle (0.05cm) node[below, yshift=0cm, xshift=0cm]{$x_a$};
        \end{tikzpicture}
        \subcaption{Situation 2} \label{alg1 b}
    \end{subfigure}
    \begin{subfigure}{0.23\textwidth}
        \centering
        \begin{tikzpicture}[thick,scale=0.5, every node/.style={scale=0.5}]
        \draw[->,thin] (0,0)--(7,0) node[right]{$x_1$};
        \draw[->,thin] (0,0)--(0,5) node[above]{$x_2$};
        \draw[constraint_color, thick] (0.05, 3) .. controls (1.5, 4) and (2.25, 1) .. (3,4) node[above, yshift=0.8cm, xshift=0.5cm]{$\cV^{u}$};
        \draw[constraint_color, thick] (3, 4) .. controls (4, 6) and (5, 3) .. (6,5);
        \draw[constraint_color, thick] (6, 5) -- (6, 0.05);
        \draw[constraint_color, thick] (0.05, 0.05) .. controls (3, 2) and (3.5, 0.05) .. (6, 0.05)node[above, xshift=-4cm, yshift=0.25cm]{$\cV^{l}$};
        \draw[constraint_color, thick] (0.05, 0.05) -- (0.05, 3);
        \draw[red!90!black, thick] (6, 2) -- (6, 3)node[right, yshift=-0.5cm, xshift=0cm]{$\cX_\text{T}$};
        \fill[black,thick,dashed] (6,2) circle (0.05cm) node[right, yshift=-0.25cm, xshift=0cm]{$x^{*}_l$};
        \fill[black,thick,dashed] (6,3) circle (0.05cm) node[right, yshift=0.25cm, xshift=0cm]{$x^{*}_u$};
        \draw[magenta!70!black, thick] (6, 3) .. controls (5, 4) and (4, 3) .. (3,4) node[above, yshift=-0.50cm, xshift=2cm]{$T^{*}_u$};
        \fill[black,thick,dashed] (3,4) circle (0.05cm) node[left, yshift=0cm, xshift=0cm]{$x_d$};
        \draw[magenta!70!black, thick] (6, 2) .. controls (4, 2) and (3, 2) .. (0.05,1) node[above, yshift=0.3cm, xshift=5.25cm]{$T^{*}_l$};
        \fill[black,thick,dashed] (0.05,1) circle (0.05cm) node[left, yshift=0cm, xshift=0cm]{$x_a$};
        \end{tikzpicture}
        \subcaption{Situation 3} \label{alg1 c}
    \end{subfigure}
    \begin{subfigure}{0.23\textwidth}
        \centering
        \begin{tikzpicture}[thick,scale=0.5, every node/.style={scale=0.5}]
        \draw[->,thin] (0,0)--(7,0) node[right]{$x_1$};
        \draw[->,thin] (0,0)--(0,5) node[above]{$x_2$};
        \draw[constraint_color, thick] (0.05, 3) .. controls (1.5, 4) and (2.25, 1) .. (3,4) node[above, xshift=-0.2cm]{$\cV^{u}$};
        \draw[constraint_color, thick] (3, 4) .. controls (4, 6) and (5, 3) .. (6,5);
        \draw[constraint_color, thick] (6, 5) -- (6, 0.05);
        \draw[constraint_color, thick] (0.05, 0.05) .. controls (3, 2) and (3.5, 0.05) .. (6, 0.05)node[above, xshift=-4cm, yshift=0.25cm]{$\cV^{l}$};
        \draw[constraint_color, thick] (0.05, 0.05) -- (0.05, 3);
        \draw[red!90!black, thick] (6, 2) -- (6, 3)node[right, yshift=-0.5cm, xshift=0cm]{$\cX_\text{T}$};
        \fill[black,thick,dashed] (6,2) circle (0.05cm) node[right, yshift=-0.25cm, xshift=0cm]{$x^{*}_l$};
        \fill[black,thick,dashed] (6,3) circle (0.05cm) node[right, yshift=0.25cm, xshift=0cm]{$x^{*}_u$};
        \draw[magenta!70!black, thick] (6, 3) .. controls (5, 4) and (2, 1) .. (0.05,2) node[above, yshift=1.3cm, xshift=5cm]{$T^{*}_u$};
        \fill[black,thick,dashed] (0.05,2) circle (0.05cm) node[left, yshift=0cm, xshift=0cm]{$x_d$};
        \draw[magenta!70!black, thick] (6, 2) .. controls (2, 0) and (3, 2) .. (0.05,0.5) node[above, yshift=0.3cm, xshift=5cm]{$T^{*}_l$};
        \fill[black,thick,dashed] (0.05,0.5) circle (0.05cm) node[left, yshift=0cm, xshift=0cm]{$x_a$};
        \end{tikzpicture}
        \subcaption{Situation 4} \label{alg1 d}
    \end{subfigure}
    \begin{subfigure}{0.23\textwidth}
        \centering
        \begin{tikzpicture}[thick,scale=0.5, every node/.style={scale=0.5}]
        \draw[->,thin] (0,0)--(7,0) node[right]{$x_1$};
        \draw[->,thin] (0,0)--(0,5) node[above]{$x_2$};
        \draw[constraint_color, thick] (0.05, 3) .. controls (1.5, 4) and (2.25, 1) .. (3,4) node[above, xshift=-0.2cm]{$\cV^{u}$};
        \draw[constraint_color, thick] (3, 4) .. controls (4, 6) and (5, 3) .. (6,5);
        \draw[constraint_color, thick] (6, 5) -- (6, 0.05);
        \draw[constraint_color, thick] (0.05, 0.05) .. controls (3, 2) and (3.5, 0.05) .. (6, 0.05)node[above, xshift=-4cm, yshift=0.25cm]{$\cV^{l}$};
        \draw[constraint_color, thick] (0.05, 0.05) -- (0.05, 3);
        \draw[red!90!black, thick] (6, 2) -- (6, 3)node[right, yshift=-0.5cm, xshift=0cm]{$\cX_\text{T}$};
        \fill[black,thick,dashed] (6,2) circle (0.05cm) node[right, yshift=-0.25cm, xshift=0cm]{$x^{*}_l$};
        \fill[black,thick,dashed] (6,3) circle (0.05cm) node[right, yshift=0.25cm, xshift=0cm]{$x^{*}_u$};
        \draw[magenta!70!black, thick] (6, 3) .. controls (5, 4) and (2, 1) .. (0.05,2) node[above, yshift=1cm, xshift=4.5cm]{$T^{*}_u$};
        \fill[black,thick,dashed] (0.05,2) circle (0.05cm) node[left, yshift=0cm, xshift=0cm]{$x_d$};
        \draw[magenta!70!black, thick] (6, 2) .. controls (5, 1) and (3, 2) .. (2,0.9) node[above, yshift=0.7cm, xshift=3cm]{$T^{*}_l$};
        \fill[black,thick,dashed] (2,0.9) circle (0.05cm) node[left, yshift=0.3cm, xshift=0cm]{$x_a$};
        \draw [draw=gray, dashed] (-0.1,-0.1) rectangle (2.5,1.5);
        \end{tikzpicture}
        \subcaption{Situation 5} \label{alg1 e}
    \end{subfigure}
    \begin{subfigure}{0.23\textwidth}
        \centering
        \begin{tikzpicture}[thick,scale=0.5, every node/.style={scale=0.5}]
        \draw[->,thin] (0,0)--(7,0) node[right]{$x_1$};
        \draw[->,thin] (0,0)--(0,5) node[above]{$x_2$};
        \draw[constraint_color, thick] (0.05, 3) .. controls (1.5, 4) and (2.25, 1) .. (3,4) node[above, xshift=-0.2cm]{$\cV^{u}$};
        \draw[constraint_color, thick] (3, 4) .. controls (4, 6) and (5, 3) .. (6,5);
        \draw[constraint_color, thick] (6, 5) -- (6, 0.05);
        \draw[constraint_color, thick] (0.05, 0.05) .. controls (3, 2) and (3.5, 0.05) .. (6, 0.05)node[above, xshift=-4cm, yshift=0.25cm]{$\cV^{l}$};
        \draw[constraint_color, thick] (0.05, 0.05) -- (0.05, 3);
        \draw[red!90!black, thick] (6, 2) -- (6, 3)node[right, yshift=-0.5cm, xshift=0cm]{$\cX_\text{T}$};
        \fill[black,thick,dashed] (6,2) circle (0.05cm) node[right, yshift=-0.25cm, xshift=0cm]{$x^{*}_l$};
        \fill[black,thick,dashed] (6,3) circle (0.05cm) node[right, yshift=0.25cm, xshift=0cm]{$x^{*}_u$};
        \draw[magenta!70!black, thick] (6, 3) .. controls (5, 4) and (2, 1) .. (1.5,2.9) node[above, yshift=0.2cm, xshift=3cm]{$T^{*}_u$};
        \fill[black,thick,dashed] (1.5,2.9) circle (0.05cm) node[above, yshift=0.3cm, xshift=0cm]{$x_d$};
        \draw[magenta!70!black, thick] (6, 2) .. controls (5, 1) and (3, 2) .. (2,0.9) node[above, yshift=0.7cm, xshift=3cm]{$T^{*}_l$};
        \fill[black,thick,dashed] (2,0.9) circle (0.05cm) node[left, yshift=0.3cm, xshift=0cm]{$x_d$};
        \end{tikzpicture}
        \subcaption{Situation 6} \label{alg1 f}
    \end{subfigure}
\caption{Illustration of the possible configurations that can result from Lines 3-5 from Algorithm 1. Figure \ref{fig: intervals} details the area indicated in subfigure (e).} \label{prototype algorithm 1 illustration}
\end{figure}
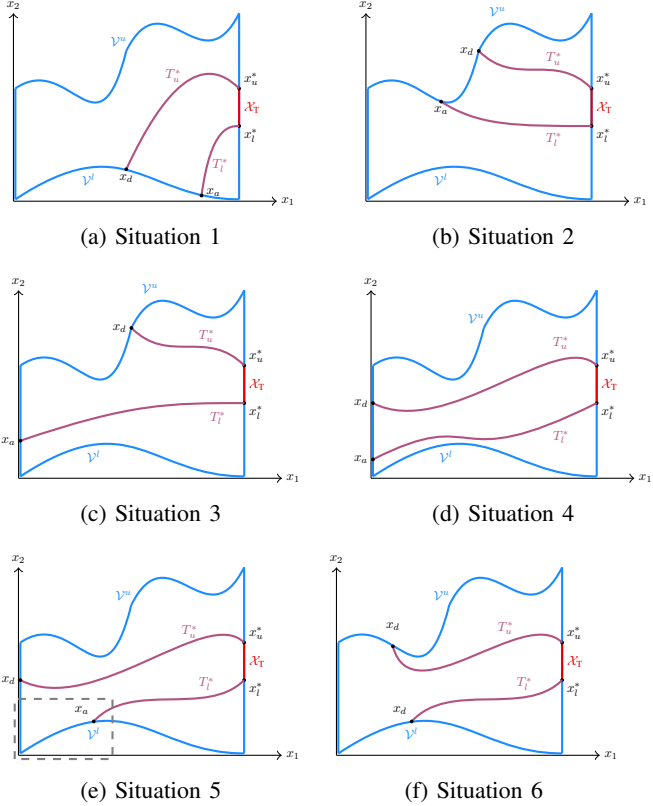
The results of Section \ref{section 4} lay the foundation for the development of the reach avoid set computation, outlined in Algrorithm 1.
Initially, the two trajectories $T_l^*$ and $T_u^*$,  produced from backward integration of the extreme points of $\cX_\text{T}$, are computed in Lines 3 and 5 respectively. We denote with $x_a$ and $x_d$ the intersections of these trajectories with the constraint set  (Lines 4, 6), with $x^*_u$, $x^*_l$ defining the interval $\cX_{\text{T}}$  \eqref{target set}.
Lines 7, l0 and 13 represent the different  cases that can occur, depending on where $x_d$ and $x_a$ lie. Figure \ref{prototype algorithm 1 illustration} illustrates all  possible situations: Line 7 covers Figure \ref{alg1 a}, where $x_d,x_a$ are in $\cV^l$. In this case no additional computations are needed as $\cZ_u =\cT^{*}_u$. On the other hand, the lower bound needs to be extended to obtain $\cZ_l$. 
Line 10 covers Figure \ref{alg1 b}. In this case, the lower bound is complete,  however, the upper bound possibly needs to be modified.
Lines 13-21 cover the remaining cases shown in Figures \ref{alg1 c} - \ref{alg1 f}, that occur when $\cT^*_u$ and $\cT^*_l$ do not intersect the same boundary $\cV^u$ or $\cV^l$. For these instances, the \texttt{extend} operation is applied,  which is described in detail below in Algorithm 2.
After all the relevant extensions have been made Line 22  returns $\cR_\epsilon (\cX_\text{T})$ which is constructed as the intersection of the sets $\cR_l(\cZ_u)$, and $\cR_u(\cZ_l)$. 

We note that the target set $\cX_\text{T}$ is the only \emph{admissible exit facet}, i.e., the area of the boundary where the trajectory $\cT_f(x_0, \lambda(x))$ will escape $\cR_\epsilon(\cX_T)$ in finite time  under appropriate control $\lambda(x)$. Several existing works on \emph{control-to-facet} strategies focus on non-linear hybrid systems with sets defined by simplices or other simple polynomial descriptions of the set boundaries \cite{habets2006reachability,habets2006control,sloth2014control}. 

\subsection{Extension of bounds of $\cR_{\epsilon}(\cX_\text{T})$}

The operation \texttt{extend}($\cV$, $[(x_{\text{end}})_1, (x_{\text{start}})_1]$, $\lambda$, $\epsilon$) is utilised in Algorithm 1 on Lines 8, 11, 15, and 19. We use $\cV$ to denote either the lower or the upper boundary, i.e., $\cV \in \{\cV^u, \cV^l\}$ and  $\cV^u, \cV^l$ are defined by \eqref{upper ACC} \eqref{lower ACC}.  
 The operation returns a curve that constrains any trajectory $\cT(x_0, \lambda(x))$, with $x_0 \in \cR_\epsilon (\cX_\text{T})$, when $\lambda(x)$ is suitably chosen.
The set $\cZ$ is effectively formed as a union of extreme trajectories and segments of $\cV$ by applying the backward dynamics \eqref{eq:state dynamics reverse}. To describe the proposed procedure we define Bouligand's tangent cone, see, e.g.,  \cite[p. 122]{Blanchini}.
\begin{definition} \label{tangent cone}
The tangent cone  of a vector $x\in\cX$  to $\cX$ \eqref{eq: admisible region additional constraint set} is
\begin{equation}
\cB(x) = \left\{ z: \liminf \limits_{\tau \to 0} \frac{\text{dist}(x+\tau z, \cX)}{\tau} = 0 \right \}. \label{tangent cone}
\end{equation}
\end{definition}

\begin{definition}\label{def: input dynamics cone}
 Consider a state $x \in \cV$ and the system dynamics \eqref{eq:state dynamics forward} $\dot{x} = f(x, \lambda)= \begin{bmatrix} x_2 & u(x, \lambda) \end{bmatrix}^\top$, with $\lambda \in [0,1]$. We define the set
 \begin{align}
\cF(x) = \{f(x, \lambda): \lambda \in [0,1] \}. \label{velocity cone}
\end{align}

\end{definition}
Utilising \eqref{tangent cone},  \eqref{velocity cone}, we can write the geometric condition that verifies whether there exists an admissible input for a state $x\in\cX$ that allows it to remain in $\cX$ as follows
\begin{equation}\label{eq: safe condition}
\cB(x) \cap \cF(x) \neq \emptyset.
\end{equation}
In our setting, the boundary of $\cX$ is defined either on $\cV^u$ or $\cV^l$. Let us consider the equations $C^u(x_1)$ \eqref{upper ACC} and $C^l(x_1)$\eqref{lower ACC}. The functions are semi-differentiable.
Taking into account dynamics \eqref{eq:state dynamics forward},  each vector  $y$ in the set $\cF(x)$ has a nonnegative first element, i.e., $y_1=x_2\geq 0$. Consequently, in the set \eqref{velocity cone} it is sufficient to consider only the \emph{right-derivative} of $C^u(x_1)$ and $C^l(x_1)$ in order to verify \eqref{eq: safe condition}. We write the right derivative of the upper and lower constraint curves as
\begin{align}
        m_u(y_1)= \nabla^+(C^u(y_1)) \label{upper right},\\
        m_l(y_1) = \nabla^+(C^l(y_1)) \label{lower right}.
\end{align}

This is computationally convenient, allowing us to define a half-space including the set of vectors from \eqref{tangent cone} that can possibly intersect \eqref{velocity cone}. The resultant half-spaces can be explicitly defined as

\begin{align}
    \cB^u (y) &= \{ x\in \R^2: x_2 \leq ( m_u(y_1) x_1 -y_1 ) - y_2 \}\label{tagent cone u},\\
    \cB^l (y) &= \{ x\in \R^2: x_2 \geq ( m_l(y_1) x_1 -y_1 ) - y_2 \}\label{tagent cone l}.
\end{align}

Thus, condition \eqref{eq: safe condition} can be verified by checking  \eqref{eq:practical safe set} 
\begin{equation}\label{eq:practical safe set}
    \cB(x) \cap \cF(x) = \begin{cases}
        \cB^u(x) \cap \cF(x)\; \text{when}\; x\in\cV^u,\\
        \cB^l(x) \cap \cF(x)\; \text{when}\; x\in\cV^l.
    \end{cases}
\end{equation}
We define the function $S(\cdot): \cV \to \R$, 

\begin{equation} 
    S(x) = 
    \begin{cases}
        \begin{bmatrix} -m_u(x) & 1 \end{bmatrix}^\top f(x, 0), & \text{when} \;\;\; x \in \cV^u,\\
        \begin{bmatrix} m_l(x) & -1  \end{bmatrix}^\top  f(x, 1), & \text{when} \;\;\; x \in \cV^l.
    \end{cases}\label{s(x)}
\end{equation}
\noindent
where $m_u(x)$ and $m_l(x)$ are given by \eqref{upper right} and \eqref{lower right}, respectively.

\begin{lemma}\label{equivalent_condition_lemma}
Consider the system \eqref{eq_system_state_equations} and the input and state constraints \eqref{eq:admissible and input constraints}. Then, for  any $x\in\cV^u$ \eqref{upper ACC}, or any $x\in\cV^l$ \eqref{lower ACC}, it holds
\begin{equation*}
S(x)\leq 0 \Leftrightarrow  \cB(x) \cap \cF(x) \neq \emptyset.
\end{equation*}
\end{lemma}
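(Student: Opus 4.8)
The plan is to convert the geometric feasibility condition $\cB(x)\cap\cF(x)\neq\emptyset$ into a single scalar inequality, exploiting that the control enters the closed-loop vector field only through the \emph{affine} actuation level $\lambda$. First I would invoke \eqref{eq:practical safe set}, which has already replaced the Bouligand cone $\cB(x)$ by the explicit half-space $\cB^u(x)$ from \eqref{tagent cone u} when $x\in\cV^u$ (and by $\cB^l(x)$ from \eqref{tagent cone l} when $x\in\cV^l$); this replacement is legitimate precisely because every vector of $\cF(x)$ has first component $x_2\geq 0$ on $\cX$, so only the right-derivatives $m_u(x_1),m_l(x_1)$ of the semi-differentiable curves $C^u,C^l$ can become binding. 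It then suffices to show $S(x)\leq 0\iff\cB^u(x)\cap\cF(x)\neq\emptyset$ for $x\in\cV^u$, and the mirror statement for $x\in\cV^l$.

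Next I would write $\cF(x)$ explicitly: by Definition~\ref{def: input dynamics cone} and \eqref{velocity cone}, $\cF(x)=\{[x_2,\ u(x,\lambda)]^\top:\lambda\in[0,1]\}$, and since $u(x,\lambda)=D(x)+\lambda(A(x)-D(x))$ is affine in $\lambda$ with $D(x)\leq A(x)$ on $\cX'$ (the fourth row of $B(x)\leq 0$), $\cF(x)$ is a segment whose first coordinate is frozen at $x_2$ while the second sweeps $[D(x),A(x)]$, minimised at $\lambda=0$ (value $D(x)$) and maximised at $\lambda=1$ (value $A(x)$). A direction $[z_1,z_2]^\top$ with $z_1\geq 0$ belongs to the half-space $\cB^u(x)$ iff $z_2\leq m_u(x_1)z_1$ (it does not leave $\cX$ across the upper boundary to first order), and to $\cB^l(x)$ iff $z_2\geq m_l(x_1)z_1$.

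Combining, for $x\in\cV^u$ we get $\cB^u(x)\cap\cF(x)\neq\emptyset$ iff some $u(x,\lambda)$ satisfies $u(x,\lambda)\leq m_u(x_1)x_2$, and since the second coordinate of $\cF(x)$ is minimised at $\lambda=0$ this is equivalent to $D(x)\leq m_u(x_1)x_2$, i.e. $D(x)-m_u(x_1)x_2\leq 0$; by \eqref{s(x)} the left-hand side is exactly $[-m_u(x),1]^\top f(x,0)=S(x)$. For $x\in\cV^l$ the argument is symmetric: feasibility needs $u(x,\lambda)\geq m_l(x_1)x_2$ for some $\lambda$, equivalently (maximising at $\lambda=1$) $A(x)\geq m_l(x_1)x_2$, i.e. $m_l(x_1)x_2-A(x)=[m_l(x),-1]^\top f(x,1)=S(x)\leq 0$.

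The last paragraph is a routine computation. The part I expect to be delicate is the reduction encoded in \eqref{eq:practical safe set}: at a merely semi-differentiable boundary point one must argue that testing the right-derivative is both necessary and sufficient, and that degenerate configurations (e.g. $x_2=0$, or $x_1$ at an endpoint of $[0,1]$) do not break the equivalence --- directions $[z_1,z_2]^\top\in\cB(x)$ with $z_1<0$ never meet $\cF(x)$ since $x_2\geq 0$, while for $z_1\geq 0$ the condition $\liminf_{\tau\to 0}\text{dist}(x+\tau z,\cX)/\tau=0$ collapses to the slope comparison against $m_u(x_1)$ (resp. $m_l(x_1)$). As \eqref{eq:practical safe set} is already established in the excerpt, I would simply cite it; otherwise this is where the real effort lies.
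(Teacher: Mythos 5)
Your proposal is correct and follows essentially the same route as the paper's proof: both reduce $\cB(x)\cap\cF(x)\neq\emptyset$ to a slope comparison against the right-derivative of the boundary curve, exploit that $u(x,\lambda)$ is monotone in $\lambda$ so that only the extreme generator ($\lambda=0$ on $\cV^u$, $\lambda=1$ on $\cV^l$) needs to be tested, and identify the resulting scalar inequality with $S(x)\leq 0$. Your write-up is in fact somewhat more explicit than the paper's, particularly in spelling out why the extreme point of the segment $\cF(x)$ is the decisive one in the $(\Leftarrow)$ direction.
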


We consider a partition that will be later used to characterise the constraint curve where \eqref{eq: safe condition} is satisfied.

\begin{lemma}\label{finite L lemma}
Consider the system \eqref{eq_system_state_equations}, the constraints \eqref{eq:admissible and input constraints}, and the boundaries $\cV=\left\{\cV^u, \cV^l \right\}$ where  $\cV^u$ and $\cV^l$ are defined by \eqref{upper ACC}  \eqref{lower ACC}, respectively.
Consider the function $S(x)$ \eqref{s(x)}. Then, for each $\cV\in \{ \cV^u, \cV^l\}$, there is an integer $M\geq 1$ and a finite set of mutually disjoint intervals $\cI_{in}= \{ \cI_{in}(1), ...,\cI_{in}(M) \}$, $\cup_{i=1}^M \cI_{in}\subseteq [0,1]$, where it holds $S(x)\leq 0$ for all $x\in\cI_{in}$.
\end{lemma}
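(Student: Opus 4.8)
The plan is to collapse the statement to a one-dimensional finiteness property and then exploit the piecewise-analytic structure of the problem data.

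First I would fix $\cV\in\{\cV^u,\cV^l\}$; by symmetry it suffices to treat $\cV=\cV^u$. By Assumption~\ref{assumption_3} the map $x_1\mapsto(x_1,C^u(x_1))$ is a continuous bijection from $[0,1]$ onto $\cV^u$, so every quantity attached to $\cV^u$ is a function of $x_1$ alone. Substituting $x_2=C^u(x_1)$ into \eqref{s(x)} and using $f(x,0)=\begin{bmatrix}x_2 & D(x)\end{bmatrix}^\top$ (resp. $f(x,1)=\begin{bmatrix}x_2 & A(x)\end{bmatrix}^\top$ on $\cV^l$) gives the scalar representative $\sigma(x_1):=S\big((x_1,C^u(x_1))\big)=-\,m_u(x_1)\,C^u(x_1)+D\big(x_1,C^u(x_1)\big)$. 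By Lemma~\ref{equivalent_condition_lemma} the set to be described is exactly $\cI^{\star}:=\{x_1\in[0,1]:\sigma(x_1)\le 0\}$, so the claim reduces to showing that $\cI^{\star}$ has finitely many connected components, which I would then relabel as the pairwise disjoint intervals $\cI_{in}(1),\dots,\cI_{in}(M)$.

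The core step is to show that $\sigma$ changes sign only finitely often. I would unfold $D$ through \eqref{eq:maxD}: on $\cV^u$ each $D_i$ becomes a scalar function $d_i(x_1)$ built from $M_i,C_i,g_i$, the torque bounds $\tau^{\min}_i,\tau^{\max}_i$ and $C^u$, and in the polynomial/trigonometric settings highlighted after Assumption~\ref{assumption_3} these data, and likewise $m_u=\nabla^+C^u$, are real-analytic off a finite set. Three observations then finish the argument: (i) the active index set $\{i:M_i(x_1)\neq0\}$ appearing in \eqref{eq:minA}--\eqref{eq:maxD} is locally constant away from a finite set, since each $\{M_i=0\}$ is either all of $[0,1]$ or finite; (ii) for any two indices $\{x_1:d_i(x_1)\le d_j(x_1)\}$ is a finite union of intervals, because $d_i-d_j$ is analytic with finitely many zeros on each of finitely many pieces, so intersecting over $j$ shows the pointwise maximum $x_1\mapsto\max_i d_i(x_1)$, hence $\sigma$, is piecewise real-analytic with finitely many pieces (continuity of $\sigma$ on each piece also following from Proposition~\ref{proposition 1} and Corollary~\ref{col: lambda x}); (iii) on each such piece $\sigma$ is either identically zero, so the closure of that piece contributes one interval to $\cI^{\star}$, or has finitely many zeros, so $\cI^{\star}$ meets the piece in finitely many intervals. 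Taking the union over the finitely many pieces and merging adjacent or overlapping intervals yields the finite pairwise disjoint family, with $M\ge1$ whenever $\cI^{\star}\neq\emptyset$ (the case $\cI^{\star}=\emptyset$ is vacuous for the subsequent \texttt{extend} construction and may be excluded).

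The main obstacle is precisely this finiteness of sign changes: it does not follow from the bare Lipschitz/$C^1$ regularity of Assumptions~\ref{path_assumption}--\ref{ass_actuator_lims}, since a Lipschitz function can cross zero on a Cantor-like set, so the substantive part of the proof is to pin down the standing regularity of the model that makes $A$, $D$, $m_u$ and $m_l$ piecewise analytic with finitely many pieces --- real-analyticity, or more generally definability in an o-minimal structure, which covers the polynomial and trigonometric manipulators and constraint curves of interest. Once that structure is granted, what remains is the routine analytic zero-counting sketched above, together with bookkeeping at the finitely many breakpoints --- index switches, zero-inertia points $M_i(x_1)=0$, and kinks of $C^u$ --- where right-continuity of $\nabla^+$ keeps $\sigma$ well defined and the finiteness count intact.
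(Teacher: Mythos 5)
Your proof takes a genuinely different --- and more demanding --- route than the paper's. The paper's own argument is brief: it observes that equilibrium points of \eqref{eq_system_state_equations} induce whole intervals where $S\le 0$, and then asserts that away from equilibria the terms $-m_u(x)x_2+D(x)$ and $m_l(x)x_2-A(x)$ are locally Lipschitz with bounded derivative, so that on a finite interval the number of roots --- and hence the number of intervals in the partition --- is finite. You instead collapse $S$ to a scalar function $\sigma(x_1)$ along the parameterised boundary and obtain finiteness of the connected components of $\{\sigma\le 0\}$ from piecewise real-analyticity (or o-minimality) of the model data, handling the index-switching and zero-inertia breakpoints explicitly. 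The trade-off is telling: your route needs a regularity hypothesis (piecewise analyticity) that is nowhere stated in Assumptions~\ref{path_assumption}--\ref{assumption_3}, whereas the paper's route stays within its stated hypotheses but at the cost of a step that is not actually valid --- Lipschitz continuity with a bounded derivative does not bound the number of roots or sign changes of a function (a $C^1$ function such as $x^3\sin(1/x)$ changes sign infinitely often near the origin, and a Lipschitz function can vanish on a Cantor set). In other words, the ``main obstacle'' you flag is real: it is precisely the gap in the paper's own proof, and your analyticity/o-minimality assumption is one honest way to close it for the polynomial and trigonometric models the paper targets. To stay strictly within the paper's stated assumptions one would have to either add such a regularity assumption explicitly or weaken the lemma (for instance, permit countably many intervals or work with an $\epsilon$-coarsened partition).
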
 

\begin{definition}\label{interval definition}
Consider the system \eqref{eq_system_state_equations}, constraints \eqref{eq:admissible and input constraints} and an interval $\begin{bmatrix} (x_{end})_1, (x_{start})_1  \end{bmatrix}$. 

We partition the interval in an ordered set of intervals $\cI = \{\cI(1), ..., \cI(L) \}$,  $\cI(i)\subseteq [(x_\text{end})_1, (x_\text{start})_1]$, $i=1,..,L$, constructed from two sets, namely, $\cI_\text{in} = \{ \cI_{\text{in}}(1), ..., \cI_{\text{in}}(M)\}$ and $\cI_{\text{out}} = \{ \cI_{\text{out}}(1), ..., \cI_{\text{out}}(P)\}$. Moreover, $|\cI_\text{in}|+|\cI_\text{out}|=M+P=L$, with $\cI_{\text{in}}(i)\neq \cI_{\text{out}}(j)$, for all $i=1,...,M, j=1,...,P$ and 
\begin{align*}
    x\in(\cI_{\text{in}})_i & \Rightarrow \cB(x) \cap F(x,u(x)) \neq \emptyset,  \\
    x\in(\cI_{\text{out}})_j & \Rightarrow \cB(x) \cap F(x,u(x)) =\emptyset. 
\end{align*}
\end{definition}

We note any two intervals $\cI_i$, $\cI_{i+1}$ are adjacent to each other and $\cI_i\cap\cI_{i+1}=(x_{\text{end}})_i=(x_{\text{start}})_{i+1}$. Moreover, for any $i=1,..,L-1$, if $\cI_i\in\cI_{\text{in}}$, then $\cI_{i+1}\in\cI_{\text{out}}$ and vice versa. 

\begin{ilex} \label{ilex_2}
Figure \ref{fig: intervals} focuses on the part enclosed by the rectangle in Figure \ref{alg1 e}. It illustrates a partition according to Definition \ref{interval definition}, showing how the section in $\cV^l$ is extended. We represent $\cI_{in}$ with $\cI_{out}$ with green and red dotted lines respectively. The cones generated by \ref{velocity cone}, evaluated at two states $x_{\text{C}}, x_{\text{D}} \in\cV^l$, are represented with purple vectors. The halfspaces $\cB^u({x_{\text{C}}})$, $\cB^u({x_{\text{D}}})$ \ref{tagent cone u}, which in this case coincide with the tangent cones $\cB({x_{\text{C}}})$ and $\cB({x_{\text{D}}})$ \eqref{tangent cone} are shown in grey. We observe  that for any $x\in\cV^l$ in an interval $\cI\in\cI_{\text{in}}$, the system can avoid crossing the boundary. On the other hand, for any $x\in\cV^l$ in  $\cI\in\cI_{\text{out}}$, there is no control action preventing crossing $\cV^l$.
\end{ilex}

As highlighted in Example \ref{ilex_2}, the boundary $\cV^u$ or $\cV^l$ often does not belong in its entirety to the reach avoid set, thus, a modification is needed. The procedure, called \texttt{extend}, is presented in Algorithm 2. Given the set of intervals for which condition $S(x)\leq 0$ holds as in Lemma 4, Algorithm 2 either adds the whole interval to the set $\cZ$ or propagates the closed-loop dynamics backwards in order to form a curve that will serve as the upper or lower bound of the computed set in Algorithm 1.  

Lines 5 and 17 of Algorithm 2 separate the cases where $\cI\in\cI_\text{in}$ and $\cI\in\cI_\text{out}$. When $\cI\in\cI_\text{in}$ (Line 9), the entire interval is added to the reach avoid set boundary. Lines 11-14 add a parts of the generated trajectory and existing constraint curve defined by $\cV$.  Line 16 represents the case where only an extreme trajectory is added to the set $\cZ$.
When $\cI\in\cI_\text{in}$ (Line 19) an $\epsilon$ step change from the boundary is performed, leading to the trajectory generation in Line 20. 
The following central result establishes that Algorithm 1 returns a reach-avoid set, in finite time.

\begin{algorithm}[h]\label{algorithm 2}
\caption{\texttt{extend}($\cV$, $[ (x_{\text{end}})_1, (x_{\text{start}})_1]$, $\lambda$, $\epsilon$)}
\begin{algorithmic}[1]
\State $\mathbf{Input:}$ Set $\cV\in\{\cV^l,\cV^u\}$  \eqref{upper ACC}, \eqref{lower ACC}, start state $x_{\text{start}}$, end state $x_{\text{end}}$, $\epsilon>0$, $\delta=(-1)^{\lambda+1}\epsilon$, partition $\cI$ as in Definition 5.
\State $\mathbf{Output:}$ $\cZ$
\State $\cZ \leftarrow \emptyset$,  $i \leftarrow 1$, $y \leftarrow \{x\in\cV:x_1 =(x_\text{start})_1\}$

\While{$i\leq L$}
    \If{$\cI(i)\in\cI_{\text{in}}$}
        \If{ $y\in \cV$}
            \State $\cZ \leftarrow \cZ \cup \cW(\cV,\cI(i))$
        \Else
            \State $\cT_\cI = \cW(\cT_b(y,\lambda),\cI(i))$
            \If{ $\cT_\cI \cap \cV \neq \emptyset$}
                \State $ x_{\text{int}} \leftarrow \cQ(\cT_\cI \cap \cV)$
                \State $\cT_{\text{int}} \leftarrow \cT_\cI\cap \{x \in \cT_\cI : x_1 \geq x_{\text{int}} \}$
                \State $\cV_{\text{int}} \leftarrow  \{x \in \cV : x_1 \leq x_{\text{int}}, x \in \cI(i) \}$
                \State $\cZ \leftarrow \cZ \cup \cT_{\text{int}} \cup \cV_{\text{int}}$
            \Else
                \State $\cZ \leftarrow \cZ \cup \cT_\cI$
            \EndIf
        \EndIf
    \ElsIf{$\cI(i)\in\cI_{\text{out}}$}
        \If{ $ y \in \cV$}
            \State $y \leftarrow \begin{bmatrix} y_1 & y_2 + \delta \end{bmatrix}^\top$
        \EndIf
        \State $\cT_\cI = \cT_b(y, \lambda) \cap \{x\in \R^2: (\exists l\in\cI(i):  x_1=l_1)\}$
        \State $\cZ \leftarrow \cZ \cup \cT_\cI$  
    \EndIf
    \State $i \leftarrow i + 1$
    \State $y \leftarrow \cQ(\cZ)$
\EndWhile
\State \textbf{return} $\cZ$
\end{algorithmic}
\end{algorithm}

\begin{figure}[h]
    \centering
    \begin{tikzpicture}[thick,scale=3.0, every node/.style={scale=1.0}, every path/.style={rfill}]
    \clip (-0.3,-0.3) rectangle + (2.8,1.75); 
    \draw[->,thin, black] (0,0)--(2.2,0) node[right]{$x_1$};
    \draw[->,thin, black] (0,0)--(0,1.3) node[above, xshift=-0.1cm]{$x_2$};
    
    \fill [black!20!white, rotate=24] (0.8, 0.15) rectangle (1.5,0.65) node[pos=.5, black, scale=0.6] {$\cB^u(x_\text{C})$};`
    
    \fill [white, rotate=24] (1.4,0.65) ellipse (0.2cm and 0.05cm);  
    \fill [black!20!white, rotate=24] (1.0,0.64) ellipse (0.2cm and 0.05cm);

    \fill [black!20!white, rotate=13.5] (1.5, 0.44) rectangle (2.08, 0.94) node[pos=.5, black, scale=0.6] {$\cB^u(x_\text{D})$};`
    
    \fill [white, rotate=13.5] (1.95,0.95) ellipse (0.15cm and 0.05cm);    
    \fill [black!20!white, rotate=13.5] (1.65,0.93) ellipse (0.15cm and 0.05cm);
    
    \draw[-,very thick] (0.7,0.475)--(1.3,0.74);
    \draw[-,very thick] (1.4,0.78)--(1.9,0.9);
    
    \draw[constraint_color, thick] (0.05, 3) .. controls (1.5, 4) and (2.25, 1) .. (3,4) node[above, xshift=-1.2cm]{$\cV^{u}$};
    \draw[constraint_color, thick] (3, 4) .. controls (4, 6) and (5, 3) .. (6,5);
    \draw[constraint_color, thick] (6, 5) -- (6, 0.05);
    \draw[constraint_color, thick] (0.05, 0.05) .. controls (3, 2) and (3.5, 0.05) .. (6, 0.05)node[above, xshift=-11cm, yshift=2.1cm]{$\cV^{l}$};
    \draw[constraint_color, thick] (0.05, 0.05) -- (0.05, 3);
    \draw[red!90!black, thick] (6, 2) -- (6, 3)node[right, yshift=-0.5cm, xshift=0cm]{$\cX_\text{T}$};
    \fill[black,thick,dashed] (6,2) circle (0.05cm) node[right, yshift=-0.25cm, xshift=0cm]{$x^{*}_l$};
    \fill[black,thick,dashed] (6,3) circle (0.05cm) node[right, yshift=0.25cm, xshift=0cm]{$x^{*}_u$};
    
    \draw[magenta!70!black, thick] (6, 3) .. controls (5, 4) and (2, 1) .. (0.05,2) node[above, yshift=1cm, xshift=4.5cm]{$T^{*}_u$};
    \fill[black,thick,dashed] (0.05,2) circle (0.05cm) node[left, yshift=0cm, xshift=0cm]{$x_d$};
    
    \draw[magenta!70!black, thick] (6, 2) .. controls (5, 1) and (3, 2) .. (2,0.9) node[above, yshift=0.8cm, xshift=1cm]{$T^{*}_l$};
    \fill[black,thick,dashed] (2,0.9) circle (0.05cm) node[right, yshift=0.2cm, xshift=0.3cm, scale = 0.75]{$x_a$};

    \draw[red!85!black, dashed, ultra thick] ((1.4, 0.73) ..controls (1.6, 0.81) and (1.8, 0.83) .. (2, 0.88);
    \draw[green!90!black, dashed, ultra thick] ((0.6, 0.36) ..controls (0.67, 0.4) and (0.8, 0.5) .. (1.4, 0.73);
    \draw[red!85!black, dashed, ultra thick] ((0.05, 0.03) ..controls (0.2, 0.13) and (0.4, 0.25) .. (0.6, 0.36);
    
    \draw[black, dashed, thick](2,-0.05)--(2,0.9);
    \draw[black, dashed, thick](1.4,-0.05)--(1.4,0.8);
    \draw[black, dashed, thick](0.6,-0.05)--(0.6,0.4);  
    \draw[black, dashed, thick](0.05,-0.05)--(0.05,0.05);
        
    \draw[ red!85!black,->,thin] (1.45,-0.05)--(1.95,-0.05)node[below, scale=0.6, xshift=-1.0cm, yshift=0.0cm]{$\cI(1)$};
    \draw[ red!85!black,->,thin] (1.95,-0.05)--(1.45,-0.05);
    
    \draw[green!70!black,->,thin] (0.65,-0.05)--(1.35,-0.05)node[below, scale=0.6, xshift=-1.5cm, yshift=0.0cm]{$\cI(2)$};
    \draw[green!90!black,->,thin] (1.35,-0.05)--(0.65,-0.05); 
    
    \draw[red!85!black,->,thin] (0.1,-0.05)--(0.55,-0.05)node[below, scale=0.6, xshift=-0.9cm, yshift=0.0cm]{$\cI(3)$};
    \draw[red!85!black,->,thin] (0.55,-0.05)--(0.1,-0.05);  
    
    \fill[black,thick,dashed] (1.0,0.6) circle (0.02cm) node[below, yshift=-0.15cm, xshift=0.1cm, scale=0.75]{$x_{\text{C}}$};
    
    \draw[->,thick, magenta!70!black](1.0,0.6)--(1.2,0.8)node[above, yshift=-0.8cm, xshift=0.2cm, scale=0.6]{$\cF({x}_\text{C})$};
    \draw[->,thick, magenta!70!black](1.0,0.6)--(1.2,0.5);
    
    \fill[black,thick,dashed] (1.65,0.84) circle (0.02cm) node[below, yshift=-0.1cm, xshift=-0.1cm, scale=0.75]{$x_{\text{D}}$};
    
    draw[->,thin](1.65,1.15)--(1.65,0.98);
    
    \draw[->,thick, magenta!70!black] (1.65,0.84)--(1.75,0.6)node[below, yshift=0.45cm, xshift=0.4cm, scale=0.6]{$\cF({x_D})$};
    \draw[->,thick, magenta!70!black] (1.65,0.84)--(1.9,0.8)node[below, yshift=0.0cm, xshift=-0.0cm, scale=0.6]{};
    \end{tikzpicture}
    \caption{Illustration of a typical set of intervals obtained along the lower velocity limit curve in the region to be extended from Figure \ref{alg1 e}. The red regions $\cI(1), \cI(3)\in\cI_{out}$ and the green regions $\cI(2)\in\cI_{in}$. At both states $x_\text{C}$, $x_\text{D}$, the extremes of $\cF(\cdot)$ are represented by purple vectors, and lower boundary of the halfspace $\cB(\cdot)$ is shown as a tangent to $\cV^l$ at the respective states.}\label{fig: intervals} 
\end{figure}
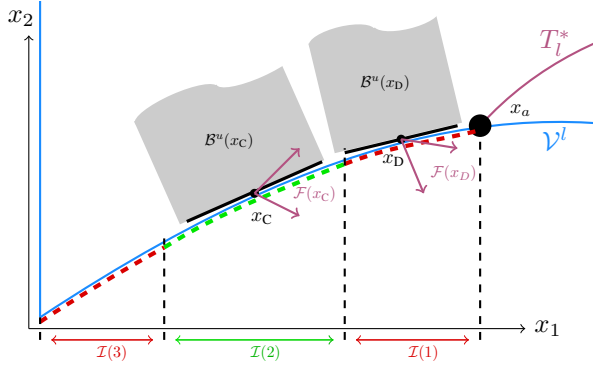

\begin{theorem} \label{theorem_1}
Consider the system  \eqref{eq:state dynamics forward}, constraints \eqref{eq:admissible and input constraints} and the target set $\cX_{\text{T}}$ \eqref{target set}. The following hold.
\begin{itemize}
    \item[(i)] Algorithm 1 terminates in finite time.
    \item[(ii)]
    The set $\cR_\epsilon(\cX_T)$ is a reach-avoid set.
\end{itemize}
\end{theorem}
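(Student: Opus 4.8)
\noindent\emph{Proof proposal.}

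\textbf{Part (i).} The plan is to locate every loop in Algorithms~1 and 2 and bound it. Algorithm~1 is itself loop-free: Lines~3--6 perform two backward integrations and two leftmost-point extractions $\cQ(\cdot)$, and Lines~7--21 form a finite cascade of conditionals, each invoking \texttt{extend} at most once. So it suffices to show the \texttt{while} loop of Algorithm~2 halts. Its counter runs from $1$ to $L=|\cI|=M+P$, and Lemma~\ref{finite L lemma} gives $M<\infty$; since the intervals of $\cI_\text{in}$ and $\cI_\text{out}$ interleave inside the bounded interval $[(x_\text{end})_1,(x_\text{start})_1]\subseteq[0,1]$ we also get $P\le M+1<\infty$, hence $L<\infty$. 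Each pass performs: one backward-trajectory computation restricted to the bounded $x_1$-window $\cI(i)$ under a fixed actuation level $\lambda\in\{0,1\}$ (for which \eqref{eq:state dynamics reverse} is $\dot x=-\Phi x-E u(x,\lambda)$ with $u(x,\lambda)$ equal to $D(x)$ or $A(x)$, both locally Lipschitz by Proposition~\ref{proposition 1}, so the arc is the unique solution), one intersection with $\cV$, and finitely many $\cQ(\cdot)$ and union operations. I would add the short remark that such an arc either spans $\cI(i)$ or meets $\cV$, and does so in finite backward time, because $\dot x_1=-x_2\le 0$ is monotone and $x_2$ cannot asymptote to $0$ from above inside a nondegenerate $x_1$-window without the arc reaching $\cV^l$; hence every pass is a finite computation and the algorithm terminates.

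\textbf{Part (ii).} I would split this into (a) controlled invariance of $\cR_\epsilon(\cX_\text{T})$ relative to $\cX_\text{T}$ (so that $\cX_\text{T}$ is the only admissible exit facet), and (b) finite-time exit through $\cX_\text{T}$. For (a), recall $\cR_\epsilon(\cX_\text{T})=\cR_l(\cZ_u)\cap\cR_u(\cZ_l)$; reading off from the construction that $\cZ_u,\cZ_l$ are graphs of continuous functions $\zeta_u\ge\zeta_l$ on an interval $[\,\cdot\,,c]$, the set is the region $\{\zeta_l(x_1)\le x_2\le\zeta_u(x_1)\}$, bounded above by the arc $\cZ_u$, below by the arc $\cZ_l$, on the right by the target slice $\cX_\text{T}$ at $x_1=c$, and possibly pinched at the left. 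By construction each of $\cZ_u,\cZ_l$ is a concatenation of closed-loop arcs of \eqref{eq:state dynamics forward} at the constant actuation level $0$, respectively $1$ (including the $\epsilon$-offset arcs produced on $\cI_\text{out}$ intervals), and of segments of the hard boundary $\cV^u$, respectively $\cV^l$, that \texttt{extend} appends only where $S(x)\le 0$. On an arc part, the constant feedback $\lambda\equiv 0$ (resp. $1$) keeps the state on the arc, since the arc is an integral curve of a locally Lipschitz field (Proposition~\ref{proposition 1}, Corollary~\ref{col: lambda x}) hence the unique solution through each of its points; on a $\cV$-part, $S(x)\le 0$ is equivalent by Lemma~\ref{equivalent_condition_lemma} to $\cB(x)\cap\cF(x)\neq\emptyset$, so a Nagumo-type argument yields an admissible $\lambda(x)$ rendering $\cX$ locally forward invariant at that state. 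Therefore there is an admissible feedback along $\cZ_u$ (and along $\cZ_l$) under which a trajectory that starts on it stays on it while $x_1$ increases monotonically to the right endpoint $x^\ast_u\in\cX_\text{T}$ (resp. $x^\ast_l\in\cX_\text{T}$).

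The core construction is then the following feedback for an arbitrary $x_0\in\cR_\epsilon(\cX_\text{T})$: apply $\lambda\equiv 1$ until the trajectory first meets $\cZ_u$, then switch to the boundary-following feedback of the previous paragraph. While $\lambda\equiv 1$ the trajectory is an integral curve of $\dot x=\Phi x+EA(x)$, so by uniqueness it crosses neither the arc $\cT^\ast_l=\cT_\text{b}(x^\ast_l,1)$ nor the hard boundary $\cV^l$; since $\cZ_l$ is assembled from exactly these pieces and joins continuously, the trajectory stays on or above $\cZ_l$, hence inside $\cX$ on the lower side and inside $\cR_\epsilon(\cX_\text{T})$; the ordering results of Section~\ref{section 4}, in particular Lemmas~\ref{rule 1}--\ref{rule 2} and their corollaries, are what make these non-crossing statements rigorous. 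Two cases arise: if the trajectory reaches $x_1=c$ before touching $\cZ_u$, then at $x_1=c$ it lies at height $\le\zeta_u(c)=x^\ast_u$ and $\ge\zeta_l(c)=x^\ast_l$, i.e.\ in $\cX_\text{T}$; otherwise, since it starts below $\cZ_u$ at $x_1=(x_0)_1$, continuity of the solution forces it to touch $\cZ_u$ at some $(x_1)^\times<c$ before it could exceed $\cZ_u$, and from $(x_1)^\times$ the boundary-following feedback carries it along $\cZ_u$ to $x^\ast_u\in\cX_\text{T}$. In both cases the trajectory remains in $\cX$ throughout (trapped between $\cZ_l$ and $\cZ_u$ until the instant it hits $\cX_\text{T}$) and reaches $\cX_\text{T}$. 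A symmetric argument with $\lambda\equiv 0$ and $\cZ_l$ works equally well.

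The step I expect to be the main obstacle is \textbf{finite time}: ruling out $x_2(t)\to 0$ with $x_1(t)\to x_1^\infty<c$, the only failure mode left since $\dot x_1=x_2\ge 0$ makes $x_1$ nondecreasing and bounded. I would argue by compactness of $\overline{\cR_\epsilon(\cX_\text{T})}$ and continuity of $A(\cdot)$ there: any limit point of such a trajectory must lie on $\cZ_l\cap\{x_2=0\}$, but every such point belongs to the $\cV^l$-extension and hence satisfies $S(x)\le 0$, which at $x_2=0$ reduces to $A(x)\ge 0$; a short invariance/Lyapunov argument using $\dot x_2=A(x)\ge 0$ near the limit then contradicts $x_2\to 0$, unless $x_2$ was already $0$ at a point with $A(x)=0$, a degenerate singular configuration that Assumption~\ref{path_assumption} together with the viability screening $S(x)\le 0$ built into \texttt{extend} excludes from $\cR_\epsilon(\cX_\text{T})$. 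The same reasoning, applied to the defining controls of $\cZ_u$, shows the boundary-following phase along the finite arc $\cZ_u$ (on which $x_1$ is monotone to $c$) also terminates in finite time. I anticipate that making this low-speed/singular-point analysis rigorous, and stating carefully the continuity-of-solutions property for the piecewise (switching) feedback, is where most of the technical work will go.
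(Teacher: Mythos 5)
Your part (i) is essentially the paper's argument (every operation is a finite trajectory propagation, and the loop bound $L$ is finite by Lemma~\ref{finite L lemma}), with some added and worthwhile detail on why each backward arc over a window $\cI(i)$ is itself a finite computation. For part (ii) you take a genuinely different route. The paper constructs a single, globally defined feedback: the convex interpolation $\lambda(x) = \frac{x_2 - (x^u(x))_2}{(x^l(x))_2 - (x^u(x))_2}$ between the upper and lower boundaries of $\cR_\epsilon(\cX_\text{T})$ on each slice, which automatically saturates to $\lambda=1$ on $\cZ_l$ and $\lambda=0$ on $\cZ_u$; it then invokes Lemmas~\ref{equivalent_condition_lemma} and \ref{finite L lemma} to show the boundary cannot be crossed and that $x_1$ increases to $c$. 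You instead use a switching strategy --- full acceleration until the trajectory first meets $\cZ_u$, then boundary-following --- and lean on uniqueness of solutions and the ordering results (Lemmas~\ref{rule 1}--\ref{rule 2}) to trap the trajectory between $\cZ_l$ and $\cZ_u$. Both work; the paper's interpolation is cleaner in that it is one Lipschitz law defined everywhere (and is the template reused in Section~\ref{section 7}), whereas your switched law needs a small well-posedness argument you gloss over: after a Nagumo-type control pushes the state off a $\cV^u$ segment of $\cZ_u$ back into the interior, you must say what feedback applies next, and why the resulting concatenation is still an admissible trajectory in the sense of the reach-avoid definition. In compensation, you are more candid than the paper about the genuinely delicate step: finite time requires excluding $x_2(t)\to 0$ with $x_1$ stalling short of $c$. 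The paper simply asserts $A(x)>0$ when $x_2=0$ on the lower boundary, although $S(x)\le 0$ there only gives $A(x)\ge 0$; your compactness/limit-point sketch is the right way to close that gap, though as you acknowledge it remains a sketch rather than a complete argument.
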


\section{Controller design}\label{section 7}
We turn our attention to the control problem, namely, constructing safe state-feedback mechanisms. First, we characterise the generic properties the controller should have so that any state $x$ in $\cR_\epsilon(\cX_T)$ can be transferred to the target set $\cX_T$ in finite time without violating state and input constraints. 
Let us denote the upper and lower boundary of the reach avoid set by $\cZ_u$ and $\cZ_l$ respectively.  

\begin{theorem}\label{control design theorem}
Consider the system \eqref{eq_system_state_equations} with state and input constraints $\cX$, $\cU_x$  \eqref{eq:admissible and input constraints}, and a target set $\cX_\text{T}$. Consider a reach avoid set $\cR_\epsilon (\cX_\text{T})$ and an initial condition $x_0\in \cR_\epsilon (\cX_\text{T})$, and the control law  $u(x, \lambda(x) )$ \eqref{eq: input definition}. Then, the trajectory $\cT_f(x_0, \lambda(x))$  lies within $\cR_\epsilon (\cX_\text{T})$ and reaches $\cX_\text{T}$ in finite time if 
\begin{equation}\label{lambda x cases}
\lambda(x) =  
\begin{cases}
    1 & \forall x \in \cZ_l, \\
    0 & \forall x \in \cZ_u,\\
    c(x) & \forall x \in \interior{(\cR_\epsilon(\cX_\text{T}))},\\
\end{cases}
\end{equation}
and $\lambda(\cdot):\cX\rightarrow [0,1]$ is a Lipschitz continuous function.
\end{theorem}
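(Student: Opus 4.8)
The statement splits into two parts: (i) the closed-loop trajectory $\cT_\text{f}(x_0,\lambda(x))$ never leaves $\cR_\epsilon(\cX_\text{T})$ for as long as it has not reached $\cX_\text{T}$, and (ii) it does reach $\cX_\text{T}$ after finite time. For (i) I would use a Nagumo/sub-tangentiality argument: by Corollary~\ref{col: lambda x} the closed-loop field $f(x,\lambda(x))=\Phi x+Eu(x,\lambda(x))$ is locally Lipschitz on $\cX$, so solutions are unique and forward invariance of the compact set $\cR_\epsilon(\cX_\text{T})$ reduces to checking the sub-tangentiality condition $f(x,\lambda(x))\in\cB(x)$ at every boundary point, with $\cB(\cdot)$ now the Bouligand cone of $\cR_\epsilon(\cX_\text{T})$ itself \cite{Blanchini}. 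Since $\dot{x}_1=x_2\geq 0$ on $\cX$, the facets $\{x_1=0\}$ and $\{x_1=1\}\setminus\cX_\text{T}$ are never crossed, so it suffices to verify sub-tangentiality on $\cZ_u$ and on $\cZ_l$.

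\textbf{Part (i).} Take $x\in\cZ_u$, where $\lambda(x)=0$. By the construction in Algorithms~1--2, $\cZ_u$ is a finite concatenation of (a) arcs of extreme trajectories $\cT_\text{b}(\cdot,0)$ and (b) arcs of $\cV^u$ over intervals of $\cI_\text{in}$. On a type-(a) arc, $f(x,0)$ is by definition tangent to $\cZ_u$ and $\cR_\epsilon(\cX_\text{T})$ lies locally below $\cZ_u$, so $f(x,0)\in\cB(x)$. On a type-(b) arc, $S(x)\leq 0$ by Lemma~\ref{finite L lemma}, so by Lemma~\ref{equivalent_condition_lemma} $\cB(x)\cap\cF(x)\neq\emptyset$; because $\lambda=0$ minimises $\dot{x}_2$ and the half-space $\cB^u(x)$ in \eqref{tagent cone u} is ordered in $x_2$, the witness can be taken to be $f(x,0)$, and since $\cR_\epsilon(\cX_\text{T})$ locally coincides with $\cX$ there, $f(x,0)\in\cB(x)$. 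At the finitely many junction points (extreme-arc/$\cV^u$ corners and the $\epsilon$-steps produced over $\cI_\text{out}$) the cone $\cB(x)$ is the intersection of the one-sided cones of the adjacent arcs, and one checks directly that $f(x,0)$ lies in it. The argument for $x\in\cZ_l$ with $\lambda(x)=1$ is symmetric, replacing $\cV^u,\cB^u$ by $\cV^l,\cB^l$ and using that $\lambda=1$ maximises $\dot{x}_2$. Hence no trajectory from $x_0\in\cR_\epsilon(\cX_\text{T})$ can leave through $\partial\cR_\epsilon(\cX_\text{T})\setminus\cX_\text{T}$.

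\textbf{Part (ii) and the main obstacle.} Since $\dot{x}_1=x_2\geq 0$ on $\cX$, $x_1(t)$ is nondecreasing and bounded above by $c$, so $x_1(t)\to c'\leq c$. On the compact set $\cR_\epsilon(\cX_\text{T})$ the input $u(x,\lambda(x))$ is bounded, so $\dot{x}_2$ is bounded, $x_2$ is uniformly continuous, and $\int_0^\infty x_2(t)\,dt=c'-(x_0)_1<\infty$; by Barbalat's lemma $x_2(t)\to 0$, so the trajectory converges to $(c',0)$. If $c'<c$, Part~(i) forces this limit point onto $\cZ_l$ (it cannot be strictly below $\cZ_l$, and a point strictly above it with $x_2=0$ would mean $\cZ_l$ itself touches $x_2=0$ at $x_1=c'$, again placing the limit on $\cZ_l$). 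But near $(c',0)$, $\cZ_l$ is a $\lambda=1$ arc along which the flow moves strictly to the right — this is where feasibility of $\cX_\text{T}$ as a target enters, ensuring $A(x)>0$ on that portion of $\cZ_l$, i.e. the backward-integration time building it is finite and $x_1$ is strictly increasing along it — so the closed-loop flow cannot asymptotically stall at $(c',0)$, a contradiction. Thus $c'=c$, the time $t^\star:=\inf\{t:x_1(t)=c\}$ is finite, and $\phi_\text{f}(t^\star;x_0,\lambda(x))\in\cR_\epsilon(\cX_\text{T})\cap\{x_1=c\}=\cX_\text{T}$. I expect this last step — excluding asymptotic creeping to $x_1=c^-<c$ (equivalently, stalling at $x_2=0$ before the target) — to be the real obstacle, as it must exploit the feasibility structure of $\cZ_l$ encoded in Algorithms~1--2; the corner-point sub-tangentiality bookkeeping in Part~(i) is routine but tedious.
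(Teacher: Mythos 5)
Your proposal is correct and follows the same skeleton as the paper's proof: invariance of $\cR_\epsilon(\cX_\text{T})$ is obtained from a sub-tangentiality condition on the boundary, and finite-time reaching from the monotone increase of $x_1$. The difference is one of depth rather than direction. The paper's argument is very terse --- it notes that Lipschitz continuity of $\lambda(x)$ makes $\cT_\text{f}(x_0,\lambda(x))$ a continuous curve that cannot cross $\boundary{\cR_\epsilon(\cX_\text{T})}$ without meeting it, observes that on the boundary condition \eqref{eq: safe condition} is verified because $\cR_\epsilon(\cX_\text{T})$ is a reach-avoid set, and then asserts that $x_1$ increases until $\cX_\text{T}$ is reached, quoting the traversal-time integral $T=\int \frac{1}{x_2(x_1)}\,dx_1$. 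Your Part (i) makes the boundary check explicit by decomposing $\cZ_u$ and $\cZ_l$ into extreme-trajectory arcs and $\cV^u,\cV^l$ segments and verifying that $f(x,0)$ (resp.\ $f(x,1)$) is the witness in $\cB(x)\cap\cF(x)$ --- this is exactly the content of Lemmas \ref{equivalent_condition_lemma} and \ref{finite L lemma} and of the construction in Algorithms 1--2, so it is consistent with, and fills in, what the paper leaves implicit. Your Part (ii) is genuinely more careful than the paper's: the Barbalat argument forcing $x_2\to 0$ if $x_1$ stalls at $c'<c$, followed by the exclusion of an asymptotic equilibrium on $\cZ_l$ via $A(x)>0$ there, is a rigorous version of the step the paper (and its proof of Theorem \ref{theorem_1}(ii)) simply asserts as ``$\dot x_2=A(x)>0$ when $x_2=0$''. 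You correctly identify that this is where the real work lies; the justification you sketch --- that a point with $A(x)=0$ on $\cZ_l$ would require infinite backward-integration time and hence cannot appear on an arc computed in finite time --- is the right way to close it, and is stronger than what the paper provides.
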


The control parameterisation above is quite general. A possible realisation of \eqref{lambda x cases} is to consider two locally Lipschitz functions $g_1(x)$ and $g_2(x)$, with $g_1(x_1) >  g_2(x_1)$,  $x\in\cX$. The continuous actuation level function is 

\begin{equation}\label{control guide saturation equation}
\lambda(x) =  
\begin{cases}
    0 & \text{if $x_{2}\geq g_{1}(x_1)$} \\
    1 & \text{if $x_{2} \leq g_{2}(x_1)$} \\
    \frac{x_{2} - g_{1}(x_1)}{g_{2}(x_1) - g_{1}(x_1)} & \text{if $g_2(x_1) < x_{2} < g_1(x_1)$}
\end{cases}
\end{equation}
We note that if we use the upper and lower bounds of $\cR_\epsilon(\cX_T)$ to define $g_1(x_1)$ and $g_2(x_1)$ respectively, a valid controller can be defined as $\lambda(x) = \frac{x_2 - x_2^u}{x_2^l - x_2^u}$.

This is a convex linear mapping between the upper an lower bounds.

It is straightforward to see that this choice satisfies all conditions \eqref{lambda x cases} of Theorem \ref{control design theorem}, excluding the Lipschitz continuity of $\lambda(x)$. Indeed, due to the formation of the bounds detailed in Algorithm 1, there may be discontinuities if steps are used in the production of the boundaries. A practical way to make these bounds continuous, and therefore utilise the aforementioned simple controller, is to  approximate the reach-avoid set bounds by (piecewise) polynomial functions.

Additionally, our framework captures a family of \emph{sliding mode-like controllers } \cite{edwards1998sliding}.  

To elaborate, a special case in \eqref{control guide saturation equation} can be retrieved by  setting $g(x_1) = g_1(x_1) = g_2(x_1)$, leading to the simplified actuation level function
\begin{equation}\label{bang bang control method}
\lambda(x) =  
\begin{cases}
    1 & \text{if $x_2 < g(x_1)$} \\
    0 & \text{if $x_2 \geq g(x_1)$}
\end{cases}
\end{equation}

We note the resulting controller does not present a continuous control but rather a bang bang style controller, that is prone to chattering \cite{lynch_park}. However, it can be the basis for a sliding mode control strategy:  We may define the function $H(x, \lambda)=|\begin{bmatrix} -m(x) & 1 \end{bmatrix}^\top  f(x, \lambda) |$, 
 where $f(x, \lambda)$ represents the system dynamics as given in Definition \ref{def: input dynamics cone} and $m(x) = \frac{d g(x_1)}{d x_1}$. By a suitable choice of $g(x_1)$, if a parameter $\lambda$ can be found such that $H(x, \lambda) = 0$ for all $x$,  the controller meets all conditions for sliding mode operation by choosing
 \begin{equation}\label{improved bang bang control method}
\lambda(x) =  
\begin{cases}
    1  &\text{if $x_2 < g(x_1)$}, \\
    0 &\text{if $x_2 > g(x_1)$}, \\
    \argmin \lambda  H(x, \lambda) &\text{if $x_2 = g(x_1)$.}
\end{cases}
\end{equation}

\section{Numerical experiment}\label{section 8}
In this section we illustrate the established framework, in terms of reach-avoid set computation and control implementation. Subsections VII.A, VII.B deal with practical considerations in implementation, while subsection VII.C provides the simulation results.

\subsection{Digital implementation}\label{sec: trajectory generation}
We consider the issue of sampling in the implementation of the state feedback and computation of  trajectories forming the reach avoid set (Algorithms 1, 2). The solution of system \eqref{eq_system_state_equations} piecewise constant inputs $u_T$ is analytic, that is, for any $t\in [0, T]$ we have
$    \phi_\text{f}(t;x_0,u(x))=e^{At} x_{0}  + u_T \int_{0}^{t} e^{A\tau} B d \tau,$

or,
    $x(t) = \begin{bmatrix}
        x_{1}(t)\\
        x_{2}(t)
    \end{bmatrix} = 
    \begin{bmatrix}
        \frac{t^2}{2} u_T + t(x_0)_2 + (x_0)_1\\
         t u_T + (x_0)_2
    \end{bmatrix}.$

Sampling time should be chosen to be small enough in order to approximate sufficiently a continuous-time controller. Input $u_T$ can be enforced to be admissible in $[0,t]$, since $u(x, \lambda(x))$ is by construction Lipschitz continuous in $x$. To this purpose,  for any two states
$x_0, x(t)\in \cX$ and considering the Lipschitz constants $L_1$, $L_2$ so that  $\|A(x_0) - A(x(t)) \| \leq L_1 \| x_0-x(t)  \|$ and $\|D(x_0) - D(x(t)) \| \leq L_2 \| x_0-x(t)  \|$, the following inequalities should hold 
\begin{equation} \label{eq_sample}
D(x) + L_2 \| x_0-x(t) \|\leq u_T \leq A(x_0) - L_1 \| x_0-x(t) \|.
\end{equation}

\subsection{Approximation of Constraint Curves}

The partition $\cI$ in Lemma \ref{finite L lemma} requires finding the roots of the function $S(x)$ \eqref{s(x)}.
By continuity of $S(x)$, this can be done solving conditions $S(x)=0$ and $\left\|\frac{d(S(x))}{dx}\right\| = \left\| \lim \limits_{h \to 0} \frac{S(x + h [1 \ \ m]^\top) - S(x)}{h}\right\| \neq 0\label{condition dS(x) neq 0}$,

with $m = m_u$ if $x$ is in $\cV^u$ and $m = m_l$ if $x$ is in $\cV^l$.
Solving above equations requires non-straightforward application of numerical methods on nonlinear functions, while in practice boundary values are evaluated at a finite number of states rather than providing analytical expressions \cite{8988702}. 
To tackle this challenge, we can approximate  $S(x)$ with a  polynomial function $\tilde{S}(x_1)$ so that

$\tilde{S}(x) \geq S(x)$ for all states $x$ belonging in $\cV^u$ and $\cV^l$,  by solving a constrained least squares problem. 

To improve approximation accuracy, one can typically increase the order of the approximating polynomial, or calculate piecewise polynomial approximations. 

\subsection{Results}\label{simulation example}
We apply our framework to the two DOF planar manipulator setup  illustrated in Figure \ref{fig-a}-\ref{fig-c}, and the commercially available Universal robot UR5  \cite{kebria2016kinematic}.
The UR5 simulation data are detaild below, where the two DOF manipulator setting is in Appendix \ref{sim data}.
The UR5 manipulator consists of six joints and respective actuators, for which $\tau^{\text{max}} = \begin{bmatrix} 150 & 150 & 150 & 28 & 28 & 28
    \end{bmatrix}$ and $\tau^{\text{min}} = \begin{bmatrix}
    -150 & -150 & -150 & -28 & -28 & -28 \end{bmatrix}$.
We define a straight line path in the joint space as $q(s) = q(0) - s (q(0) - q(1))$ where $q(0) = \begin{bmatrix}\frac{\pi}{2} & -\frac{\pi}{4} & \frac{\pi}{3} & \frac{2\pi}{3} & -\frac{\pi}{2} &
-\frac{\pi}{3}\end{bmatrix}^\top$, $q(1) = \begin{bmatrix}
    0 & 0 & 0 & 0 & 0 & 0
\end{bmatrix}^\top$.
The torque limitations along with the path description allow the computation of the path dynamics and  $A(x)$, $D(x)$ \eqref{eq:minA}, \eqref{eq:maxD}. We assume additional inequality constraints translated in the state space $x_2 \leq 4 \sin(10x_1+5) - 2 \sin(18x_1^3) + 10$, $x_2 \geq 4 \sin(10x_1+5) - 2 \sin(18x_1^3) - 2$.
We apply piecewise constant inputs $u_T$, following the analysis Section VII.A. 
To compensate for the lack of knowledge of the Lipschitz constant for the state feedback, we consider modified maximum acceleration and deceleration bounds, so that the controller is defined as $u(x, \lambda(x)) = \tilde{D}(x) - \lambda(x)(\tilde{A}(x) - \tilde{D}(x))  $,  where $\tilde{A}(x) = A(x) -  0.05 (A(x) - D(x))$ and $\tilde{D}(x) = D(x) +  0.05 (A(x) - D(x))$.
The modified input equation creates a buffer of 5\% the difference between the limits to allow for the case where $A(x)$ may decrease or $D(x)$ may increase over the sampling time, and is consistent with the effect the Lipschitz constant has on the bounds \eqref{eq_sample}.

The target set \eqref{target set} is $\cX_{\text{T}} = \{ x \in \cX: x_1=1, 1 \leq x_2 \leq 5 \}$. 
The shaded region in Figure \ref{fig:RAS and constraints} represents the reach avoid set calculated via Algorithm 1, with the target set denoted in red. In this instance, we identify two intervals (Definition 5) for the lower bound and 167 intervals for the upper bound with the value of $\epsilon$ set to 0.1 for both boundaries.

\begin{figure}[h]
    \centering
    \begin{tikzpicture}
    \node[anchor=south west,inner sep=0] at (0,0) {\includegraphics[width=1\linewidth]{"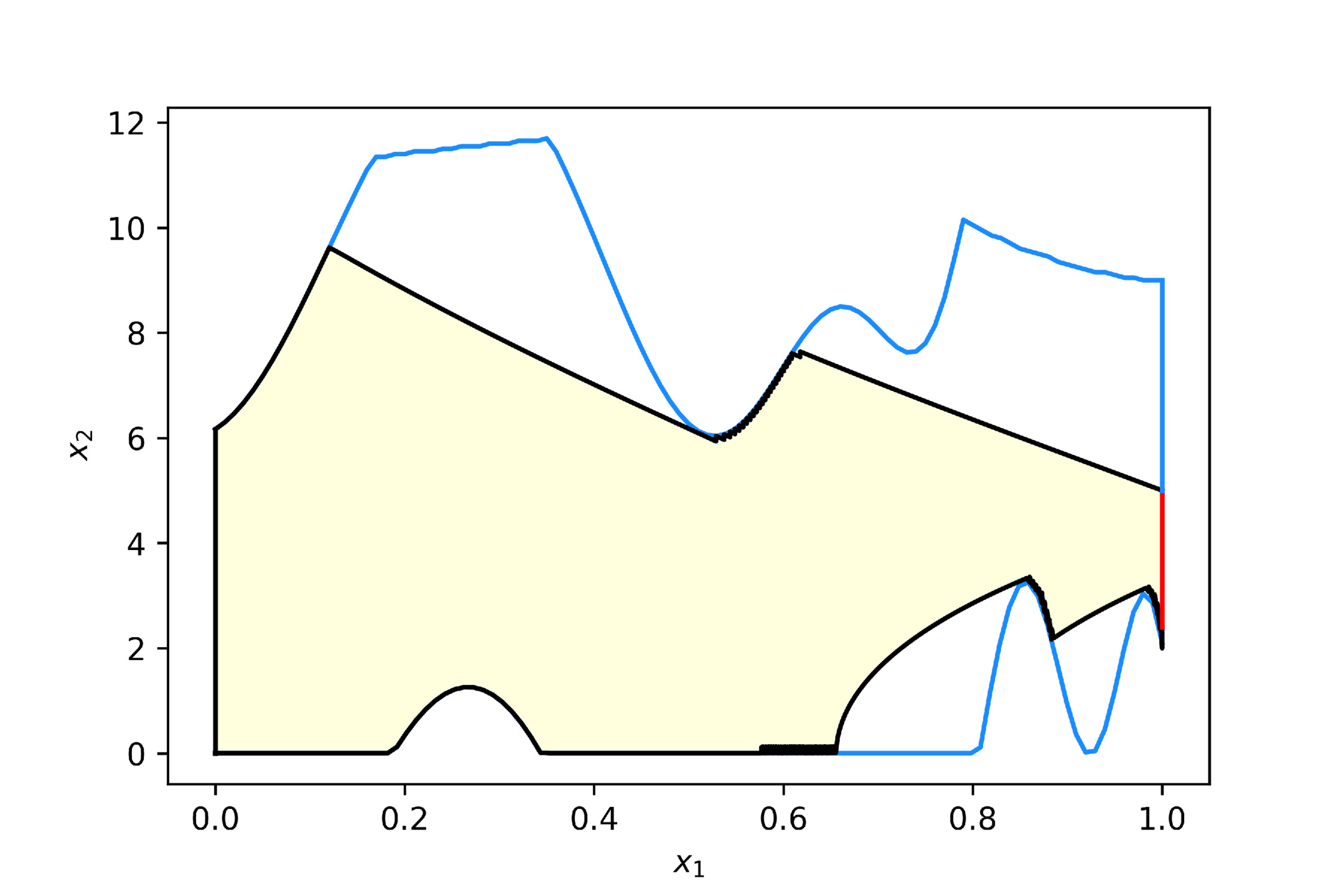"}};
    \node[text width=3cm] at (4.5,4.25) 
    {$\cX$};
    \node[text width=3cm] at (5.0,2.25) 
    {$\cR_\epsilon(\cX_\text{T})$};
    \end{tikzpicture}

    \caption{$\cR_\epsilon(\cX_\text{T})$ is shown in shaded color. The boundary of $\cX$ is in blue,  the target set $\cX_\text{T}$ is in red.}\label{fig:RAS and constraints}
\end{figure}

To showcase a specific control trajectory, we set $x_{\text{initial}} = \begin{bmatrix} x_1 & x_2 \end{bmatrix} = \begin{bmatrix} 0 & 3 \end{bmatrix}$. 
We apply two different control strategies: The first concerns the \emph{state feedback} controller described by the actuation level function in \eqref{control guide saturation equation} with
$g_1(x_1) = 2.92 x_1^2 + -3.42 x_1 + 5$, $g_2(x_1) = -4.78 x_1^2 + 8.18 x_1$.
The second is the \emph{ open-loop time optimal} control \cite{lynch_park}. To compute the time optimal path we target a final state of $x_{\text{final}} = \begin{bmatrix} x_1 & x_2 \end{bmatrix}^\top = \begin{bmatrix} 1 & 4 \end{bmatrix}^\top$.
\begin{figure}[h]
    \centering
    \begin{tikzpicture}
    \node[anchor=south west,inner sep=0] at (0,0) {\includegraphics[width=1\linewidth]{"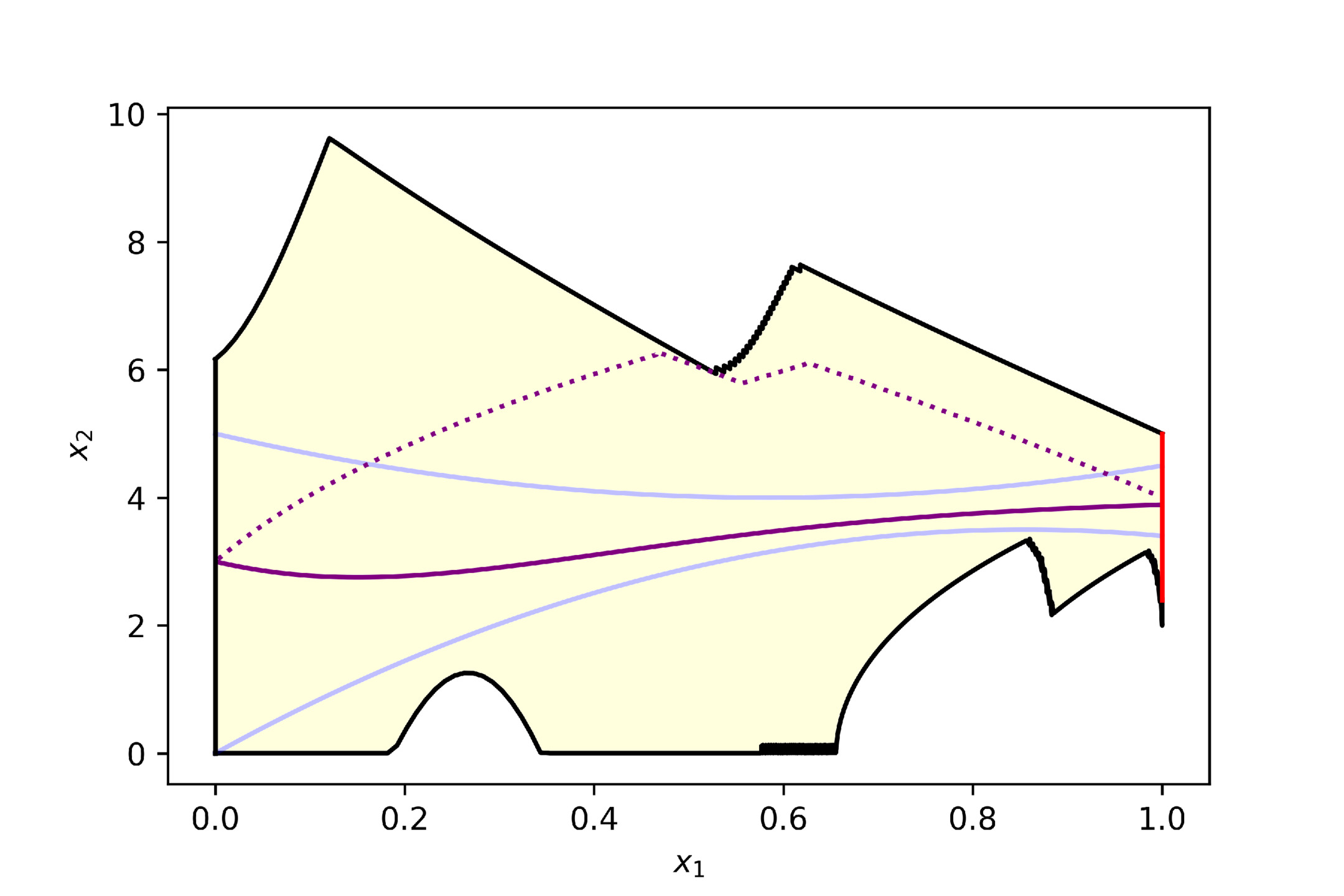"}};
    \node[text width=1cm, scale=0.6, color=purple, rotate=5] at (3.0,2.35) 
    {$\text{state feedback}$};
    \node[text width=1cm, scale=0.6, color=purple, rotate=20] at (2.8,3.15) 
    {$\text{Time optimal}$};
    \node[text width=1cm, scale=0.6, color=blue!60!white, rotate=-10] at (2.0,3.15) 
    {$g_{1}(x_{1})$};
    \node[text width=1cm, scale=0.6, color=blue!60!white, rotate=15] at (2.0,1.5) 
    {$g_{2}(x_{1})$};    
    \end{tikzpicture} 
    \caption{Time optimal control trajectory (dotted purple), state feedback control trajectory (solid purple with inputs held for 10ms), $g_1(x_1)$ and $g_2(x_1)$ are (light blue) .}\label{fig:time optimal vs state feedback}
\end{figure}
The closed loop trajectories of the two chosen controllers is shown in Figure \ref{fig:time optimal vs state feedback}. It is worth observing the reach avoid set contains both the time optimal trajectory as well as the state feedback controller. Moreover, we note the form of $g_{1}(x_{1})$ and $g_{2}(x_{1})$ can be varied to affect the shape of the state feedback trajectory. 
The computation times for each case are in Table I. Both control implementation and simulations are written in python, and ran on a standard laptop with a 7th gen intel core i5 processor. 

The potential of our approach is shown in the average computation time for a single step of the feedback control strategy. It is also worth noting that the computation time for the reach avoid set on the UR5 added to the trajectory generation is faster than computing the time optimal control alone.

\setlength\extrarowheight{3pt}
\begin{table}[H]
\centering

\begin{tabular}{l|l|l}
Robot                              & \textbf{UR5} & \textbf{Two DOF Planar}  \\ 

\hline
Time optimal controller                              &  $82.4$ s & $2.3 \times 10^{-1}$ s              \\ 
\hline
Controller \eqref{control guide saturation equation} computation time                       & $49 \times  10^{-3}$ s  & $21  \times  10^{-6}$ s              \\
\hline
Sampling period T for \eqref{control guide saturation equation}               & $50  \times 10^{-3}$ s  & $1 \times 10^{-3}$  s              \\
\hline
Reach avoid set computation                     & 60.3 s  & 9.6 s              \\

\hline
\end{tabular}
\caption{Table presenting data for time optimal and state feedback computations based on the reach avoid set.}
\end{table}

\noindent
The time response of the UR5 in the joint space is shown in Figure \ref{fig:joint response}. Unsurprisingly, the time optimal control is faster.
\begin{figure}[h]
    \centering
    \includegraphics[width=1\linewidth]{"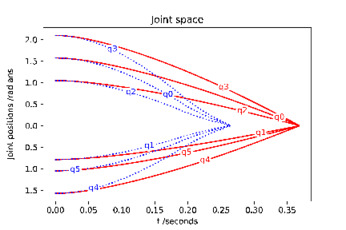"}
    \caption{Minimum time optimal joint response  (dotted blue) and state feedback \eqref{control guide saturation equation} response (solid red). }\label{fig:joint response}
\end{figure}
The torque response for the time optimal and the state feedback methods are given in Figures \ref{fig:TO torque response}, \ref{fig:fb torque response} respectively. It is clear that the time optimal control leads to switching between maximum and minimum acceleration for at least one actuator, exhibiting large instantaneous variations. In this instance there are two switches, however in the general case there can be several, contributing to the wearing of actuators and possibly leading to additional steps for  shaping the control input.
In contrast, the state feedback response gives a naturally Lipschitz continuous response through the application of equation \eqref{control guide saturation equation} in choosing the actuation level to apply. A real time implementation is indeed possible using conventional computing resources as illustrated in Figure \ref{fig:50 torque response}. Nevertheless, the control response can be made smoother with a smaller sampling period as shown in Figure \ref{fig:10 torque response},  which would be implementable, e.g., using embedded hardware implementation. 

\begin{figure}[h]
    \centering
    {\includegraphics[width=1\linewidth]{"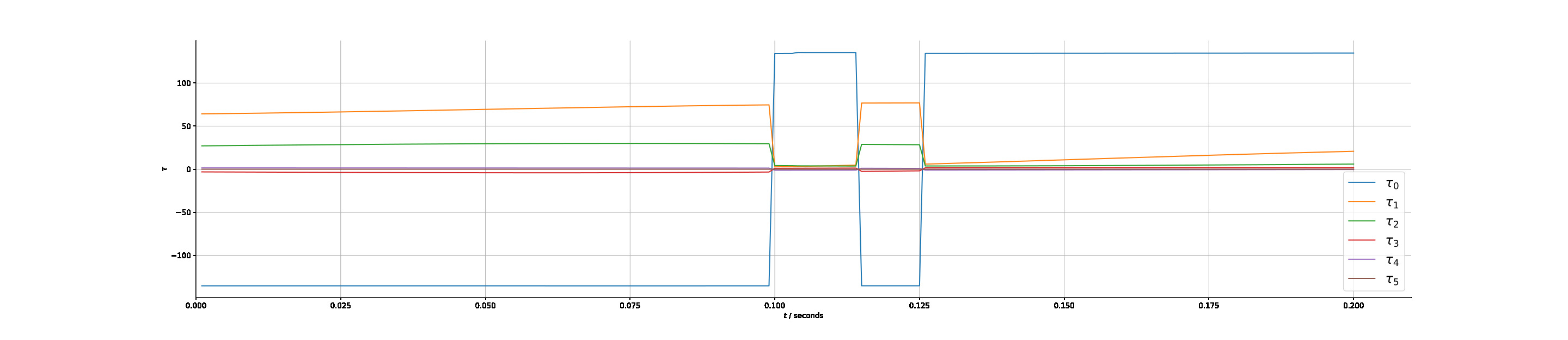"}};
    \caption{Optimal control torques ($\tau_0,...,\tau_5$) time response.}\label{fig:TO torque response}
\end{figure}

\begin{figure}[h]
    \centering
    \begin{subfigure}{\linewidth}
    {\includegraphics[width=1\linewidth]{"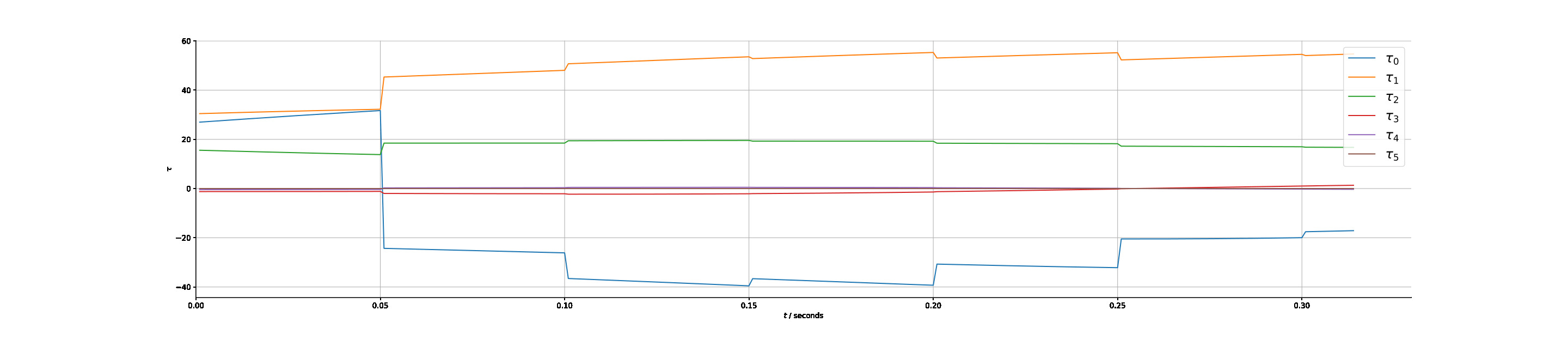"}};
    \subcaption{Torques ($\tau_0,...,\tau_5$) time response, $T=50$ms.}\label{fig:50 torque response}
    \end{subfigure}
    
    \begin{subfigure}{\linewidth}
    {\includegraphics[width=1\linewidth]{"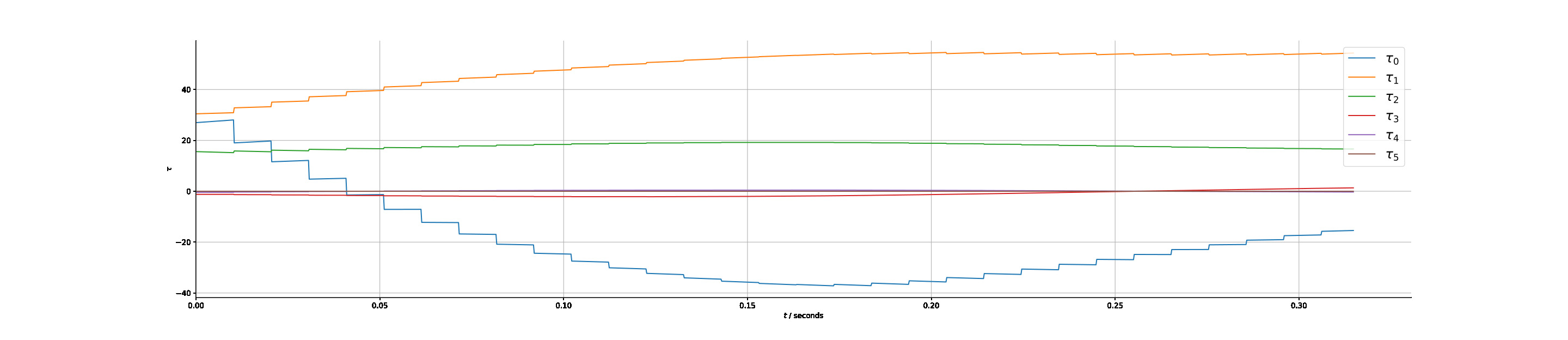"}};
    \subcaption{Torques $\tau_0,...,\tau_5$ time response, $T=10$ms.}\label{fig:10 torque response}
    \end{subfigure}
    
    \caption{State feedback control \eqref{control guide saturation equation} torque response for each actuator.}\label{fig:fb torque response}
\end{figure}

\section{Conclusions} \label{Conclusions}
We explored the trajectory planning problem for robotic manipulators using a well established methodology from a different angle.
We formulated and addressed the problem of finding safe state-feedback controllers as a reach-avoid problem in the projected path dynamics using ordering properties of closed-loop trajectories under a suitable parameterisation of the control law. The established algorithm is practicable and  terminates in finite time, while numerical experiments with a commercial manipulator show the method works well.  The feedback mechanism as well the scalability of the approach suggest that the method could be potentially be used in problems in safety, e.g., in collaborative robotics.  Our future work will focus on using the established framework as a building block on composition of trajectories between different paths with motion primitives, using the inherent flexibility offered, with the ultimate aim to address the path planning problem. Moreover, we aim to relax some of the assumptions on the shapes of the constraint and target sets posed herein, effectively increasing the generality of our approach.

\bibliographystyle{IEEEtran}	
\bibliography{references}

\begin{appendices}

\section{Manipulator models}\label{sim data}
The simple two DOF planar manipulator shown in Figure \ref{fig-a} used throughout the examples has the Lagrangian dynamic parameters 
\begin{align*}
    M_L &= \begin{bmatrix}
        0.03 + 0.02 \cos(q_2) & 0.01 + 0.01 \cos(q_2)\\
        0.01 + 0.01 \cos(q_2) & 0.01
    \end{bmatrix}\\
    C_L &= \begin{bmatrix}
        0 & -0.01(2\dot{q_1} + \dot{q_2}) \sin(q_2)\\
        0.01\dot{q_1} \sin(q_2) & 0
    \end{bmatrix}\\
    g_L &= \begin{bmatrix}
        0.981 \cos(q1) + 0.4905 \cos(q_1+q_2)\\
        0.4905 \cos(q_1+q_2)
    \end{bmatrix}
\end{align*}

We consider a circular arc path that gives us is defined by the center point and radius. The workspace description in  $x$ and $y$ directions is encoded by $z(x_1)$ and $y(x_1)$ as defined by a radius of $0.15$ and center point $(0.1, 0.3)$, i.e.,  $z(x_1) = 0.1 + 0.15cos(\pi x_1)$, $y(x_1) = 0.3 - 0.15sin(\pi x_1)$,
The variation of the joints in the form $q(x_1)=\begin{bmatrix}q_1(x_1) & q_2(x_1) \end{bmatrix}^\top$, with
\begin{align*}
    q_1(x_1) &= \arctan2(y(x_1),z(x_1)) -\\ \nonumber & \arccos(\frac{z^2(x_1)+y^2(x_1)}{0.4\sqrt{z^2(x_1)+y^2(x_1)}})\\
    q_2(x_1) &= \pi - \arccos(1-12.5(x^2(x_1)+y^2(x_1))),
\end{align*}
allowing $q_1(x_1)$ and $q_2(x_1)$ to be written in terms of $x_1$ as
follows
\begin{align*}
    &q_1(x_1) = \arctan2(0.3 - 0.15\sin(\pi x_1) ,0.1 + 0.15\cos(\pi x_1)) -\\ \nonumber & \arccos(\frac{(0.1 + 0.15\cos(\pi x_1))^2+(0.3 - 0.15\sin(\pi x_1) )^2}{0.4\sqrt{(0.1 + 0.15cos(\pi x_1))^2+(0.3 - 0.15\sin(\pi x_1) )^2}}),\\
    &q_2(x_1) = \pi -  \arccos(1-12.5((0.1 + 0.15\cos(\pi x_1))^2\\ \nonumber & \;\;\;\; +(0.3 - 0.15\sin(\pi x_1) )^2)). 
\end{align*}

The relevant vectors for the path dynamics can be computed as $M(x_1)\dot{x_2} = M_L\frac{dq(x_1)}{d x_1}$, $C(x_1)x_2^2 = M_L\frac{d^2q(x_1)}{d x_1^2} x_2^2 + C_L \dot{q}(x_1),$ $g(x_1) = g_L(q(x_1))$

The times presented in section \ref{simulation example} concern the setting described in this section. We also apply the same constraint vector 
with the target set defined as \eqref{target set} in $\cX_{\text{T}} = \{ x \in \cX: x_1=1, 4 \leq x_2 \leq 8.5 \}$. The time optimal method uses the final state $x_{\text{final}} = \begin{bmatrix} x_1 & x_2 \end{bmatrix} = \begin{bmatrix} 1 & 4 \end{bmatrix}$.
The feedback mechanism is of the form \eqref{control guide saturation equation} with $ g_1(x_1) = 2 x_1 + 4$, $g_2(x_1) = 3.5 x_1 + 1$.

The initial state of the simulation is $x_{\text{initial}} = \begin{bmatrix} x_1 & x_2 \end{bmatrix}^\top = \begin{bmatrix} 0 & 3 \end{bmatrix}^\top$.

The universal robots UR5s mathematical model can be found in \cite{kebria2016kinematic}, \url{https://www.mathworks.com/matlabcentral/fileexchange/72049-kinematic-and-dynamic-modelling-of-ur5-manipulator?s_tid=srchtitle}.

\section{} \label{app_proofs}

\subsection{Proof of Proposition \ref{proposition 1}}

The following preliminary Fact establishes results between compositions of Lipschitz continuous functions, while Lemma \ref{lemma 1} shows that at least one element of $M_i(x_1)$ is strictly positive for all admissible values of $x_1$. 

\begin{Fact} \label{fact_lipschitz_combination}
 Suppose functions $f_1(\cdot):\R^n\rightarrow \R$ and $f_2(.):\R^n\rightarrow \R$ are locally Lipschitz continuous in a set $\cS\subset\R^n$, with Lipschitz constants $L_1$ and $L_2$ respectively. The following hold:
\begin{itemize}
    \item[(i).] $f_3(x) = f_1(x)+f_2(x)$ is locally Lipschitz in $\cS$.
    \item[(ii).] $f_4(x)=f_1(f_2(x))$ is locally Lipschitz in $\cS$.
    \item[(iii).] Suppose that $f_1(x)$, $f_2(x)$ are bounded in $\cS$. Then, $f_5(x)=f_1(x)f_2(x)$ is locally Lipschitz in $\cS$. 
    \item[(iv).] Suppose that $|f_1(x)|>a$, $x\in\cS$. Then the function $f_6(x)=\frac{1}{f_1(x)}$ is locally Lipschitz in $\cS$.
    \item[(v).] Consider $f_i(x)$, $i=1,...,N$ and the function $f(x)=\max_i\{ f_i\}$. Let $f_i(x)$ be Locally Lipschitz in $\cS_i=\{ x\in\cS: f_i(x)\geq f_j(x), \forall j=1,..,N, j\neq i\}$. Then, $f$ is locally Lipschitz in $\cS$.
    \item[(vi).] Consider $f_i(x)$, $i=1,...,N$ and the function $f(x)=\min_i\{ f_i\}$. Let $f_i(x)$ be locally Lipschitz in $\cS_i=\{ x\in\cS: f_i(x)\leq f_j(x), \forall j=1,..,N, j\neq i\}$. Then, $f$ is locally Lipschitz in $\cS$.
\end{itemize}
\end{Fact}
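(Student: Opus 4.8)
The whole statement is an exercise in unwinding the definition of local Lipschitz continuity: it is enough, for each fixed $x_0\in\cS$, to exhibit a neighbourhood $\cN$ of $x_0$ together with a constant $L$ on which the function in question is $L$-Lipschitz on $\cN\cap\cS$. The plan for parts (i)--(iv) is to fix such an $x_0$, take $\cN$ to be the intersection of the Lipschitz neighbourhoods of $f_1$ and $f_2$ provided by the hypotheses (for (iii) the hypothesis already supplies global bounds $M_1,M_2$ for $f_1,f_2$ on $\cS$, and for (iv) the bound $|f_1|>a$ holds on all of $\cS$), and then run a one-line algebraic estimate on $\cN\cap\cS$.

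Concretely: for (i) the triangle inequality yields $|f_3(x)-f_3(y)|\le|f_1(x)-f_1(y)|+|f_2(x)-f_2(y)|\le(L_1+L_2)\|x-y\|$. For (ii), reading $f_1$ as a function of one real variable and using that $f_2$ is continuous (hence sends a small enough neighbourhood of $x_0$ into a set on which $f_1$ is $L_1$-Lipschitz), I get $|f_1(f_2(x))-f_1(f_2(y))|\le L_1|f_2(x)-f_2(y)|\le L_1L_2\|x-y\|$. For (iii), adding and subtracting $f_1(x)f_2(y)$ gives $|f_1(x)f_2(x)-f_1(y)f_2(y)|\le|f_1(x)|\,|f_2(x)-f_2(y)|+|f_2(y)|\,|f_1(x)-f_1(y)|\le(M_1L_2+M_2L_1)\|x-y\|$. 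For (iv), from $\frac{1}{f_1(x)}-\frac{1}{f_1(y)}=\frac{f_1(y)-f_1(x)}{f_1(x)f_1(y)}$ and $|f_1(x)f_1(y)|>a^2$ one reads off the Lipschitz constant $L_1/a^2$.

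For (v) the plan is: fix $x_0\in\cS$ and let $I=\{i:f_i(x_0)=f(x_0)\}$ be the set of indices active at $x_0$. Because there are finitely many indices and each active $f_i$ is continuous near $x_0$ (it is locally Lipschitz on $\cS_i\ni x_0$), there is a neighbourhood $\cN_0$ of $x_0$ on which every inactive $f_j$ ($j\notin I$) lies strictly below every active one, so that $f=\max_{i\in I}f_i$ on $\cN_0\cap\cS$. Choosing, for each $i\in I$, a Lipschitz neighbourhood $\cN_i$ with constant $L_i$, and setting $\cN=\cN_0\cap\bigcap_{i\in I}\cN_i$, $L=\max_{i\in I}L_i$, the standard ``the current winner's value dominates'' argument applies: for $x,y\in\cN\cap\cS$ pick $i^\star\in I$ with $f(x)=f_{i^\star}(x)$, whence $f(x)-f(y)\le f_{i^\star}(x)-f_{i^\star}(y)\le L\|x-y\|$, and symmetrically, giving $|f(x)-f(y)|\le L\|x-y\|$. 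Part (vi) follows the same pattern (reverse the inequalities, or write $\min_i f_i=-\max_i(-f_i)$ and combine (v) with (i)).

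The routine estimates are the one-liners above; the part that actually needs care is the localization in (v)--(vi): one has to justify that near $x_0$ only the active indices matter (this is where continuity of the active $f_i$ and finiteness of the index set enter), that $x_0\in\cS_i$ for every active $i$ so the local-Lipschitz hypothesis on $\cS_i$ is available at $x_0$, and that on the shrunken neighbourhood the chosen winner $f_{i^\star}$ is controlled at both endpoints. In the same spirit, in (ii) and (iii) one must confirm that continuity of $f_2$ (respectively boundedness of $f_1,f_2$) really is inherited on the neighbourhood before the algebra is performed. Once this bookkeeping is in place, nothing else is needed.
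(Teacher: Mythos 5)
Your proposal is correct, and parts (i)--(iii) coincide with the paper's argument (the same triangle inequality, the same composition bound, the same add-and-subtract trick with the bounds $M_1,M_2$). Two points of genuine divergence are worth noting. First, you actually prove (iv) via $\frac{1}{f_1(x)}-\frac{1}{f_1(y)}=\frac{f_1(y)-f_1(x)}{f_1(x)f_1(y)}$ and the lower bound $|f_1(x)f_1(y)|>a^2$; the paper's proof silently omits this item, so your version fills a small gap. Second, for (v) and (vi) the paper does not use your active-index localization at all: it writes $\max(f_1,f_2)=\tfrac{1}{2}\left(f_1+f_2+|f_1-f_2|\right)$ and $\min(f_1,f_2)=\tfrac{1}{2}\left(f_1+f_2-|f_1-f_2|\right)$ and then invokes parts (i) and (ii) together with the Lipschitz continuity of the absolute value. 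That identity-based route is shorter but directly covers only $N=2$ (it must be iterated for general $N$) and, more importantly, it tacitly requires each $f_i$ to be Lipschitz on all of $\cS$ rather than merely on its own region $\cS_i$ as the statement hypothesises. Your ``current winner dominates'' argument is the one that genuinely matches the stated hypothesis on the active indices, handles arbitrary $N$ in one pass, and correctly isolates the only delicate step, namely showing that inactive indices can be discarded on a small enough neighbourhood; the one caveat is that this discarding step needs some continuity of the inactive $f_j$ near $x_0$, which the literal hypothesis (Lipschitz only on $\cS_j$, with $x_0\notin\cS_j$) does not quite supply --- a blemish shared by the paper's own proof and harmless in the intended application, where all the $f_i$ are locally Lipschitz everywhere.
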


\begin{proof}
(i) For $x,y\in\cS$, we have $|f_1(y)+f_2(y)-f_1(x)-f_2(x) |\leq |f_1(y)-f_1(x) |+|f_2(y)-f_2(x) |\leq (L_1+L_2)|y-x|$. (ii) We have $|f_1(f_2(y))-f_1((f_2(x)) |\leq L_1|f_2(y)-f_2(x)|\leq L_1L_2|y-x|.$ (iii). Let  $|f_1(x)|\leq M_1$, $|f_2(x)|\leq M_2$ for all $x\in\cS$. It follows $|f_1(y)f_2(y)-f_1(x)f_2(x) |=|f_1(y)f_2(y)-f_1(y)f_2(x)+f_1(y)f_2(x)+f_1(x)f_2(x) |\leq M_1L_2|y-x| +M_2L_1|y-x|=(M_1L_2+M_2L_1)|y-x|$. 
(v) We can write $\max(f_1, f_2) = \frac{f_1 + f_2 +| f_1 -f_2|}{2}$. Since $|x|$ is Lipschitz continuous it can be concluded from Fact 1 (ii) that $| f_1 -f_2|$ is also Lipschitz.  (vi) Since  $\min(f_1, f_2) = \frac{f_1 + f_2 -| f_1 -f_2|}{2}$, the same reasoning as Fact (v) can be applied.  $\blacksquare$
\end{proof}

\begin{lemma}\label{lemma 1}
Under Assumption 1, for all $x\in \cX^{'}$ there exists at least one index $i^{*}\in\{1,2,...,N \}$ such that $|| M_{i^{*}}(x_1)||\geq \alpha_{1}$, for some $\alpha_1>0$.
\end{lemma}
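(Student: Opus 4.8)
The plan is to use the explicit form the vector $M(x_1)$ takes after projecting the Lagrangian dynamics onto the path. Substituting $q=q(x_1)$, $\dot q=\frac{dq}{ds}(x_1)\,x_2$ and $\ddot q=\frac{dq}{ds}(x_1)\,\dot x_2+\frac{d^2q}{ds^2}(x_1)\,x_2^2$ into $M_L(q)\ddot q+C_L(q,\dot q)\dot q+g_L(q)=\tau$ and collecting the terms multiplying $\dot x_2=\ddot s$, one reads off
\begin{equation*}
M(x_1)=M_L(q(x_1))\,\frac{dq}{ds}(x_1)\in\R^N,
\end{equation*}
so $M(x_1)$ is the image of the path tangent $\frac{dq}{ds}(x_1)$ under the inertia matrix, and its components $M_i(x_1)$ are scalars (hence $\|M_{i^\star}(x_1)\|=|M_{i^\star}(x_1)|$). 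For $x\in\cX'$ the first two rows of $B(x)\le 0$ force $x_1\in[0,1]$, so Assumption~\ref{path_assumption} applies: $q$ is $C^1$ on $[0,1]$ and $\|\frac{dq}{ds}(x_1)\|\ge\alpha_0>0$.

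Next I would bound $\|M(x_1)\|$ below, uniformly in $x_1$. Using that the manipulator inertia matrix $M_L(q)$ is symmetric positive definite for every $q$, together with continuity of $q\mapsto M_L(q)$ and compactness of the path image $q([0,1])$, the smallest eigenvalue $\lambda_{\min}(M_L(q(x_1)))$ attains a strictly positive minimum $\underline m$ over $x_1\in[0,1]$ (equivalently, the continuous map $x_1\mapsto\|M_L(q(x_1))\frac{dq}{ds}(x_1)\|$ is strictly positive on the compact interval $[0,1]$, hence bounded away from zero). This gives $\|M(x_1)\|\ge\underline m\,\|\frac{dq}{ds}(x_1)\|\ge\underline m\,\alpha_0$ for all $x_1\in[0,1]$. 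Since $\|M(x_1)\|\le\sqrt N\,\max_i|M_i(x_1)|$, I then obtain $\max_{i}|M_i(x_1)|\ge\underline m\,\alpha_0/\sqrt N=:\alpha_1>0$, and any index attaining that maximum can be taken as $i^\star$; note $i^\star$ may depend on $x_1$, which is harmless for the pointwise claim (the subsequent Lipschitz analysis of $A,D$ in Proposition~\ref{proposition 1} handles the switching between such indices via Fact~\ref{fact_lipschitz_combination}(v)--(vi)).

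The only non-routine point is the uniform positive definiteness of $M_L$: the excerpt assumes $M_L,C_L,g_L$ globally Lipschitz but does not explicitly state that $M_L(q)$ is positive definite. I would justify it either by invoking it as a standard structural property of rigid-body manipulator dynamics, or, closer to the paper's style, by noting that only the values of $M_L$ over the compact path image matter, so nonsingularity of $M_L(q(x_1))$ plus continuity already yield a uniform positive lower bound on its smallest singular value by a compactness argument. I expect this to be the main (indeed essentially the only) obstacle; the remainder is the projection identity together with an elementary norm inequality.
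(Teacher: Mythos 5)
Your proof is correct and follows the same skeleton as the paper's: factor $M(x_1)=M_L(q(x_1))\frac{dq}{ds}(x_1)$, use positive definiteness of the inertia matrix together with Assumption~\ref{path_assumption} to bound $\|M(x_1)\|$ from below, and then pass from the vector norm to a single component via norm equivalence. Where you differ is in the middle step: the paper lower-bounds $\|M_L(x_1)y\|$ by expanding $y$ in the (orthogonal) eigenvector basis of $M_L$, forming the cross-polytope $\cS=\conv{\pm z_1,\dots,\pm z_N}$, and inscribing a ball in a scaled copy of it, whereas you simply invoke $\|M_L(q)y\|\ge\lambda_{\min}(M_L(q))\|y\|$ and obtain a uniform constant $\underline m>0$ by continuity of $x_1\mapsto\lambda_{\min}(M_L(q(x_1)))$ on the compact interval $[0,1]$. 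Your route is shorter, and it has two genuine advantages: it makes the uniformity of $\alpha_1$ over $x_1\in[0,1]$ explicit (the paper's final constant $\tfrac{1}{dN\sigma_{\max}(P)}$ depends on quantities $d$ and $P$ that vary with $x_1$, and the uniformization is left implicit), and it sidesteps the somewhat delicate polytope inclusion $d\cS\supseteq \B(\cdot)$. Your closing remark correctly identifies the one external input both proofs share — positive definiteness of $M_L$, which the paper also asserts without proof as a standard structural property of rigid-body dynamics — and your observation that the maximizing index $i^\star$ may switch with $x_1$ is handled downstream exactly as you say, via Fact~\ref{fact_lipschitz_combination}(v)--(vi) in the proof of Proposition~\ref{proposition 1}.
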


\begin{proof}
By construction, it holds that $M(x_1)=M_L(x_1)\frac{dq(x_1)}{dx_1}$. Moreover, the symmetric mass matrix $M_L(x_1)$ is positive definite for all $x \in \cX'$. 
Let us consider an arbitrary $y=\frac{d q(x_1)}{d x_1}$, such that $||y||=\alpha_0$, where $\alpha_0$ is provided in Assumption~1. Let $z_i$, $i = 1,...,N$, be the normalised eigenvectors of $M_L(x_1)$ such that $||z_i|| = \alpha_0, i = 1,..,N$. The eigenvectors $z_i$ are mutually orthogonal as $M_L(x)$ is symmetric. Consequently, we can express $y$ in the base of eigenvectors $y = \sum^{N}_{i=1}c_i \hat{z}_i$, with  $c_i\geq 0$, $\sum_{i=1}^Nc_i\geq 1$ and $\hat{z}_i=z_i$, or $\hat{z}_i=-z_i$. The last inequality holds since $\|y\|=\|z_i\|$. It follows that $M_L(x_1)y=\sum_{i=1}^N\lambda_ic_iz_i$, where $\lambda_i>0$ are the eigenvalues of $M_L(x_1)$. Consider the set $\cS = \text{conv}(\pm  z_1,..,\pm  z_N)$. We can write $\cS=\{x\in\R^N: \|Px \|_1\leq 1 \}$, where $P=\left(\begin{bmatrix} z_1 & \cdots & z_N \end{bmatrix}^{\top}\right)^{-1}$.  
Setting $d = \sum_{i=1}^N \lambda_ic_i $ we can write $M_L(x_1)y=d \sum_{i=1}^N \left( \frac{\lambda_ic_i}{d} \right)z_i$, or, $M_L(x_1)y\in d\boundary{\cS}.$  By norm equivalence we have $\|x\|_1\leq N\| x\|_2$, thus $d\cS\supseteq \B(\frac{1}{dN\sigma_{\max}(P)})$, and $\|M_L(x_1)y\|\geq \frac{1}{d N \sigma_{\max}(P)}$, where $\sigma_{\max}(P)$ is the largest singular value of $P$. Consequently, there necessarily exists at least one element of $M_L(x_1)y$ such that $\|M_L(x_1)y \|\geq \alpha_1$ for some finite $\alpha_1$ and for all $y\in \B(\alpha_0)$. The result for $\| y\|>\alpha_0$ follows directly by the homogeneity of the linear mapping.   $\blacksquare$
\end{proof}

\noindent \textbf{Proof of Proposition \ref{proposition 1}}
The proof is split into two parts, namely when the elements $|M_i(x)|$ are bounded from below for all $i$, and  where there exists a subset of indices $\hat{I}\subset[1,N]$, where $|M_i(x)| \leq \epsilon$ for any arbitrarily small $\epsilon$.

\noindent 
\underline{Case 1:} $| M_i(x) | \geq \alpha_1>0$, $i=1,...,N$.
By definition \eqref{eq:minA}, \eqref{eq:maxD} and Fact 1 (v), (vi), $A(x)$ and $D(x)$ are locally Lipschitz continuous in $\cX_i$ if $A_i(x)$ and $D_i(x)$ are. We show that $A_i(x)$ and $D_i(x)$ are Lipschitz functions by analysing the numerator and the denominator separately.
The numerator in $A_i(x)$ and $D_i(x)$ is constructed by the sum of the elements of vectors $\tau^{\max}$ and $\tau^{\min}$, $-C(x)x_2^2$ and $-g(x_1)$. By Fact 1 (i), (iii) if each vector element in this sum is Lipschitz the numerator will be Lipschitz for all $i=1,...,N$. By Assumption \ref{ass_actuator_lims}, $\tau^{\max}(x)$ and $\tau^{\min}(x)$ are Lipschitz.
It is also known from, e.g., \cite{choi2001iterative}, \cite{bouakrif2013velocity},  \cite{de2001commande}, that $M_L(\cdot)$, $C_L(\cdot)$ and $g_L(\cdot)$ are locally Lipschitz functions.  The vector $g(x_1) = g_{L}(q(x_1))$, therefore $g(x_1)$ is also Lipschitz from Fact 1 (ii). 
The vector $C(x)x_2^2$ can be rewritten as $C(x)x_2^2= M_L(q(x_1)) \frac{d^2(q(x_1))}{d x_1^2}x_{2}^{2} + C_L (q(x1), \dot{q}(x_1))$. Since $M_L(\cdot)$, $\frac{d^2(q(x_1))}{d x_1^2}$, $x_{2}^{2}$, $C_L (\cdot)$ are all locally Lipschitz continuous, by applying Fact 1 (i), (iii) we have that  $C(x)x_2^2$ is locally Lipschitz continuous. Hence, the numerator in the definition of equations $A_i(x)$, $D_i(x)$ is locally Lipschitz. 
The denominator for both $A_i$ and $D_i$ is $M_{i}(x_1)$. From Fact1 (iii) $M(x_{1}) = M_L(q(x_{1})) \frac{d q(x_1)}{d x_{1}}$ is Lipschitz continuous since  $M_{L}(q(x_1))$ and $\frac{d q(x_{1})}{d x_{1}}$ are. Since $| M_{i}(x_1) | > \alpha_{1}$, by Fact 1 (iv) $\frac{1}{M_{i}(x_1)}$ is also Lipschitz. Thus, $A_{i}$ and $D_{i}$, and consequently $A(x)$ and $D(x)$ are locally Lipschitz continuous in $\cX^{'}$.

\noindent
\underline{Case 2:} Suppose there exists a subset of indices $\hat{\cI}\subset \{1,2,...,N\}$ such that $| M_{i} (x) | \leq \epsilon$, where $\epsilon$ is an arbitrary small real number and $i\in\hat{I}$. 
By Lemma \ref{lemma 1}, there exists at least an index i such that $|M_i(x)| \geq \alpha_1$, for some $\alpha_1 > 0$. Let $\cI = \{1, 2,...,N\} \setminus \hat{\cI}$. Let us consider a point where $M_i(x_1)$, $i\in\hat{\cI}$, becomes arbitrarily smal.  
Setting $a_i(x) = \frac{\tau^{\max}_{i}(x) - C_i(x_1)x_2^2 - g_i(x_1)}{M_i(x_1)}$, 
$d_i(x) = \frac{\tau^{\min}_{i}(x) - C_i(x_1)x_2^2 - g_i(x_1)}{M_i(x_1)}$

we can express $A_i(x)$ and $D_i(x)$ as
\begin{align}
A_i(x) = 
\begin{cases}
    a_{i}(x) & \text{if $M_i(x_1)>0$},\\
    d_{i}(x) & \text{if $M_i(x_1)<0$},\\
\end{cases}\label{A(x) min cases}\\
D_i(x) = \begin{cases}
    d_{i}(x) & \text{if $M_i(x_1)>0$},\\
    a_{i}(x) & \text{if $M_i(x_1)<0$}. \\
\end{cases}\label{D(x) max cases}
\end{align}
Further, we set $a_i(x)=\frac{\beta}{\epsilon}$, $d_i(x)=\frac{\gamma}{\epsilon}$.

Assuming $\epsilon \to 0^{+}$ or $\epsilon \to 0^{-}$ we have
\begin{align}
\lim\limits_{\epsilon \to 0^{+}} (a_i) = 
\begin{cases}
    -\infty \;\;\; & \text{when} \;\;\; \beta < 0,\\
    +\infty \;\;\; & \text{when} \;\;\; \beta > 0,
\end{cases}\label{1}\\
\lim\limits_{\epsilon \to 0^{+}} (d_i) \to 
\begin{cases}
    -\infty \;\;\; & \text{when} \;\;\; \gamma < 0,\\
    +\infty \;\;\; & \text{when} \;\;\; \gamma > 0,
\end{cases}\label{2}\\
\lim\limits_{\epsilon \to 0^{-}} (d_i) = 
\begin{cases}
    +\infty \;\;\; & \text{when} \;\;\; \gamma < 0,\\
    -\infty \;\;\; & \text{when} \;\;\; \gamma > 0,
\end{cases}\label{3}\\
\lim\limits_{\epsilon \to 0^{-}} (a_i) \to 
\begin{cases}
    +\infty \;\;\; & \text{when} \;\;\; \beta < 0,\\
    -\infty \;\;\; & \text{when} \;\;\; \beta > 0.
\end{cases}\label{4}
\end{align}

Equations \eqref{1}, \eqref{3} relate to $A_i(x)$ and \eqref{2}, \eqref{4} relate to $D_i(x)$. We note for the  case where $\beta=0$, then $a_i = 0$, and when $\gamma=0$, then $d_i=0$, which is a Lipschitz continuous function. Last, we define
\begin{align*}
    A^{'}(x) &:= \min\limits_{\{i\in I\}}A_i(x), &
    \hat{A}(x) &= \min \limits_{\{i \in \hat{I}\}} A_i(x),\\
    D^{'}(x) &= \max\limits_{\{i\in I\}}D_i(x), &
    \hat{D}(x) &= \max \limits_{\{i \in \hat{I}\}} D_i(x).
\end{align*}
Functions $A^{'}(x), D^{'}(x) \in \R$ have a finite range. Consequently, $A(x)$ and $D(x)$ can be written as 
\begin{align}
    A(x) &= \min( A^{'}(x), \hat{A}(x)), \nonumber\\
    D(x) &= \max( D^{'}(x), \hat{D}(x)). \nonumber
\end{align}

We study the case when $\epsilon \to 0$. If $\hat{A}(x) \to \infty$, then $A(x) = A^{'}(x)$. Likewise, if $\hat{D}(x) \to - \infty$, then $D(x) = D^{'}(x)$. In these cases  we revert to Case 1.  Analysing similarly \eqref{1} - \eqref{4} allows us to observe that all possible situations that render $A(x)$ and $D(x)$ Lipschitz continuous are covered when $\beta \geq 0$ and  $\gamma \leq 0$. This holds in the cases where $\epsilon \to 0^{+}$ and $\epsilon \to 0^{-}$. Finally, we consider the case $\beta < 0, \gamma > 0$, in the cases where $\epsilon \to 0^{+}$ and $\epsilon \to 0^{-}$. We define two large positive numbers, $M_1$ and $M_2$ such that $A(x) = \min (A^{'}, -M_1) = -M_1$ and $D(x) = \max ( D^{'}, M_2 ) = M_2$. Clearly, $A(x)<D(x)$ thus, $x \notin \cX'$.  Hence, any state $x$ that produces an unbounded value for $A(x)$ or $D(x)$ is necessarily outside the admissible region, i.e., $x\notin \cX'$. We  conclude the only possible arrangements provide local Lipschitz continuity properties to \eqref{A(x) min cases} and \eqref{D(x) max cases}. By combining Case 1 and 2, the result is obtained. $\blacksquare$

\subsection{Proof of Lemma \ref{rule 1}}

By Corollary \ref{col: lambda x}, $u(x)$ is locally Lipschitz continuous in $\cX$. Consequently, by Fact 1 (i), (iii) the backward dynamics $f(x) = -Ax-Bu(x)$ is also locally Lipschitz continuous. Therefore by the Picard–Lindelöf theorem the solution of  $\dot{x}= f(x)$ is unique for a given $x$  \cite{9780821883280} \cite{lindelof1894application}. Suppose an intersection between two trajectories occurs at $x$, the backward dynamics must produce two solutions which is a contradiction. Therefore intersection is not possible. The same reasoning can be applied to the forward dynamics.
$\blacksquare$

\subsection{Proof of Lemma \ref{rule 2} }
Suppose there are two trajectories that intersect defined as $\cT_u = \cT_u(x_u, \lambda_u)$ and $\cT_l = \cT_l(x_l, \lambda_l)$ with $\cT_l \cap \cT_u = \{x^{*}\}$. We describe these two trajectories with four pieces  that emanate from $x^{*}$, namely, as $\cT_u = \cT_{b,u} \cup \cT_{f,u}$ and $\cT_l = \cT_{b,l}\cup \cT_{f,l}$, where $\cT_{b,u} = \cT_b(x^{*}, \lambda_u)$, $\cT_{f,u} = \cT_f(x^{*}, \lambda_u)$, $\cT_{b,l} = \cT_b(x^{*}, \lambda_l)$, $\cT_{f,l} = \cT_f(x^{*}, \lambda_l)$. 
We observe that for all $x\in int(\cX)$, $\frac{\partial u(x, \lambda)}{\partial \lambda} =A(x)-D(x)> 0$. Thus, 
 $\lambda_u > \lambda_l$ implies $u(x^{*}, \lambda_u) > u(x^{*}, \lambda_l)$. 
Let  $ \dot{x}=f(x)$ as given by \eqref{eq:state dynamics forward}. Consider the gradients defined as $f^u = f(x^{*}, \lambda_u) = \begin{bmatrix}x^*_2 & u(x^{*}, \lambda_u) \end{bmatrix}^\top$ and 
$f^l = f(x^{*}, \lambda_l) = \begin{bmatrix}x^*_2 & u(x^{*}, \lambda_l) \end{bmatrix}^\top$.
We consider the \emph{angle of emanation} from $x^\ast$ for some vector $f\in\R^2$ as $\theta(f) = \tan^{-1}{( \frac{f_2}{f_1}} )$. At  intersection it holds that $f^u_1=f^l_1$, and $f^u_2>f^l_2$, thus  $\frac{f^u_2}{f^u_1} > \frac{f^l_2}{f^l_1}$, or,  $\theta(f^u)>\theta(f^l)$. Consequently, for any $\delta>0$ small enough and the slice $\cX_f = \cW(\cX, x_1^* + \delta)$ with $\bar{x}_u=\cT_{f,u}\cap\cX_f$ and $\bar{x}_l=\cT_{f,l}\cap\cX_f$, it holds that $\bar{x}_{u,2} > \bar{x}_{l,2}$.
Similarly, for the  slice $\cX_b = \cX_v(x_1^* - \delta)$ with $\tilde{x}_u=\cT_{b,u}\cap\cX_b$ and $\tilde{x}_l=\cT_{b,l}\cap\cX_b$ it holds that $\tilde{x}_{u,2} < \tilde{x}_{l,2}$.  
The proof is complete if one considers that the trajectories are continuous, thus, no jumps are allowed and consequently a second intersection cannot happen.   
$\blacksquare$

\subsection{Proof of Lemma \ref{equivalent_condition_lemma} }

$(\Rightarrow)$ The inner product $S(x)$ is between two vectors, the first being the normal vector $n\in\left\{ \begin{bmatrix} -m_u(x) & 1 \end{bmatrix}^\top , \begin{bmatrix}m_l(x) & -1 \end{bmatrix}^\top \right\}$ to $\cV$ where $\cV\in \{\cV^u, \cV^l\}$ evaluated at $x\in\cV$.
In general $\cV$ is not smooth, however it is continuous.

The second part of the inner product represents  the vector field of the system dynamics choosing the extreme value of the control actuation $\lambda$, i.e, $\dot{x} \in \left\{ f(x, 0), f(x, 1)  \right\}\subset \cF(x,u(x))$.
Therefore, when $S(x)\leq 0$ then $\dot{x} \in \cF(x)$, and $\dot{x} \in \cB(x)$, thus $\cB(x) \cap \cF(x) \neq \emptyset$.
$(\Leftarrow)$
Since for each $x$ the input is a convex combination of two limits  \eqref{eq: input definition}, $\cF(x,u(x))$ is a convex polyhedral cone with generators $A(x)$ and $D(x)$. Consequently, $\cB(x) \cap \cF(x) \neq \emptyset$ necessarily implies  $\begin{bmatrix}x_2 & D(x)\end{bmatrix}^\top\cap \cB(x)\neq \emptyset$ when $x\in\cV^u$ and $\begin{bmatrix}x_2 & A(x)\end{bmatrix}^\top\cap \cB(x)\neq \emptyset$. Therefore $S(x)\leq 0$.~$\blacksquare$

\subsection{Proof of Lemma \ref{finite L lemma}}

We need to show that $S(x)$ has a finite number of roots, thus, the partition $\cI$ is finite. First, notice if there are any equilibrium points for the system \eqref{eq_system_state_equations}, they induce  intervals. At non equilibrium points, the terms $-m_u(x) x_2 + D(x)$ or  $m_l(x) x_2 - A(x)$ are locally Lipschitz continuous functions, with a bounded derivative. Therefore, within a finite interval, the number of roots is bounded, and thus the partition is finite. $\blacksquare$

\subsection{Proof of Theorem \ref{theorem_1}}
 (i) All operations within Algorithm 1 involve propagation of trajectories and have a finite computation time. To show that Algorithm 2 terminates in finite time, it is enough to observe that $L$ is finite by Lemma 4, and every operation described within the loop in Lines 6-26 is finite. 
\noindent (ii) Consider any initial condition state $x$ in $\cR_\epsilon(\cX_T)$. 
To show $\cX_{\text{T}}$ will be reached in finite time we explicitly construct function $\lambda(x)$: We consider the slice $\cW(\cR_\epsilon (\cX_{\text{T}}), x_1)$ given by \eqref{slice description}, and define the vectors $x^u(x)\in\cW(\cR_\epsilon (\cX_{\text{T}}), x_1)$,  $x^l(x)\in\cW(\cR_\epsilon (\cX_{\text{T}}), x_1)$  on the upper and lower boundary of $\cR_\epsilon (\cX_{\text{T}})$. We define the control law $\lambda(x) = \frac{x_2 - (x^u(x))_2}{(x^l(x))_2 - (x^u(x))_2}$. 
The control law is constructed such than when $x_2 = x_2^l$, $\lambda(x) = 1$ and thus $\dot{x}_2 = u(x, \lambda(x)) = A(x)$. Likewise, if $x_2 = x_2^u$, $\lambda(x) = 0$ and thus $\dot{x}_2 = u(x, \lambda(x)) = D(x)$. By construction of $\cR_\epsilon(\cX_{\text{T}})$ and Lemmas \ref{equivalent_condition_lemma}, \ref{finite L lemma}, if the state lies on the upper or lower boundary $\cV^u$ and $\cV^l$ respectively, then \eqref{eq: safe condition} is verified for some inputs and there exists an input that will drive the system inside $\cX$. Moreover, since $\dot{x}_1 =x_2 \geq 0$ the value of $x_1$ increases with time when $x_2>0$. For the case when $x_2=0$, $x$ is necessarily on the lower boundary, thus, $\dot{x}_2 = A(x) > 0$, thus, the value of $x_1$ will increase in finite time. Last, taking into account that the right boundary of   $\cR_\epsilon(\cX_{\text{T}})$ is $\cX_{\text{T}}$, there is necessarily a finite time $t^*$ such that the solution to the system $\phi_\text{f}(t^*;x_0,\lambda(x))$ is in $\cX_\text{T}$. $\blacksquare$

\balance

\subsection{Proof of Theorem \ref{control design theorem}}

The system \eqref{eq:state dynamics forward} is continuous if $\lambda(x)$  \eqref{lambda x cases} is Lipschitz continuous. Thus, $\cT_f(x, \lambda(x))$ represents a continuous curve in the phase plane. This prevents crossing of the boundary of $\cR_\epsilon(\cX_\text{T})$ without intersecting it.

Consider a state $x$ on the boundary $\boundary{\cR_\epsilon(\cX_{T})}$. Since $\cR_\epsilon(\cX_{T})$ is a reach-avoid set, there exists  $u(x, \lambda(x))$ such that \eqref{eq: safe condition} is verified.

Moreover, the value of $x_1$ will continually increase until $\cX_{T}$ is reached in finite time. Specifically, the time taken for a trajectory between any two points $(x_a)_1$ and $(x_b)_1$ is given by $T = \int_{(x_1)_a}^{(x_1)_b}\frac{1}{x_2(x_1)}d x_1$ \cite[Chapter 9]{lynch_park}.
$\blacksquare$

\end{appendices}

\end{document}